\DeclareMathOperator*{\argmin}{argmin}
\theoremstyle{plain}
\newtheorem{assumption}{Assumption}
\theoremstyle{remark}
\newcommand{\qvect}{\mathbf{q}}
\newcommand{\uvect}{\mathbf{u}}
\newcommand{\RRTMP}{${\tt {RRT^\star \text{\underline{\hspace{0.2cm}}} MotionPrimitives}}$}
\newcommand{\setLUT}{\mathcal{Z}}
\newcommand{\costLUT}{\mathcal{C}}
\newcommand{\BSnote}[1]{{\textcolor{black}{#1}}}
\begin{document}

\title{Sampling-based optimal kinodynamic planning with motion primitives
}


\author{Basak Sakcak$^{1}$ \and Luca Bascetta$^{1}$
	\and Gianni Ferretti$^{1}$ \and Maria Prandini$^{1}$}

\authorrunning{Basak Sakcak et al.} 

\institute{$^{1}$The authors are with Politecnico di Milano, Dipartimento di Elettronica, Informazione e Bioingegneria, Piazza L. Da Vinci 32, 20133, Milano, Italy
	{\tt name.surname@polimi.it}}

\date{Received: date / Accepted: date}

\maketitle

\begin{abstract}
This paper proposes a novel sampling-based motion planner, which integrates in  RRT$^\star$ (Rapidly exploring Random Tree star) a database of pre-computed motion primitives to alleviate its computational load and allow for motion planning in a dynamic or partially known environment.
The database is built by considering a set of initial and final state pairs in some grid space, and determining for each pair an optimal trajectory that is compatible with the system dynamics and constraints, while minimizing a cost.
Nodes are progressively added to the tree {of feasible trajectories in the RRT$^\star$ algorithm} by extracting at random a sample in the gridded state space and selecting the best obstacle-free motion primitive in the database that joins it to an existing node.
The tree is rewired if some nodes can be reached from the new sampled state through an obstacle-free motion primitive with lower cost.
The computationally more intensive part of motion planning is thus moved to the preliminary offline phase of the database construction {at the price of some performance degradation due to gridding. Grid resolution can be tuned so as to compromise between (sub)optimality and size of the database. The planner is shown to be }asymptotically optimal as the grid resolution goes to zero and the number of sampled states grows to infinity.
\keywords{Optimal sampling-based planning \and Kinodynamic planning \and Motion Primitives}
\end{abstract}

\section{Introduction}
Motion planning is one of the fundamental problems in robotics, and consists of guiding the robot from an initial state to a final one along a collision-free path.

For many robots, focusing only on kinematics could result in collision-free paths that are impossible to be executed by the actual system. In particular, for systems that are differentially constrained, such a decoupled approach makes it difficult to turn a collision-free path into a feasible trajectory. In order to tackle this problem, \cite{donald1993kinodynamic} proposed the idea of \emph{kinodynamic planning}, which combines the search for a collision-free path with the underlying dynamics of the system, so that the resulting trajectory would be feasible.

For most robotic applications, the solution to the planning problem should not only be feasible and collision-free, but also satisfy some properties such as, e.g., reaching the goal in minimum time, minimizing the energy consumption, and maximizing safety. These additional requirements have shifted the focus from simply designing collision-free and feasible trajectories to finding optimal ones.

This paper deals with \emph{optimal kinodynamic motion planning} for systems with complex dynamics and subject to constraints.

\subsection{Literature review}

A significant amount of work in the robotics community has then been dedicated to the problem of kinodynamic planning so as to determine a trajectory that fulfills the differential constraints arising from the dynamics of the robot. Solving this problem is complex, in general, since it requires a search in the state space of the robot, which often implies a higher-dimensional search space compared to a pure kinematic planning.

Most of the motion planners can be classified under the two categories of \emph{exact} and \emph{sampling-based} methods, \cite{lavalle2006planning}.
The former looks for a solution in the continuous state space, while the latter samples this space redefining it as a graph where nodes are connected via edges representing local trajectories between sampled states.

Exact methods are said to be \emph{complete}, since they terminate in finite time with a solution, if one exists, and return the nonexistence otherwise. However, with the exact approaches even the simplest problem is PSPACE hard (\citealp*{reif1979complexity}).
Exact methods require that the obstacles are represented explicitly in the state space, dramatically increasing the problem complexity. However, they can provide practical solutions for problems that are characterized by a low dimensional state space, or for which a low dimensional obstacle representation can be adopted. Obstacles introduce non convex constraints in the admissible state space, and make the problem of computing an optimal trajectory hard. In particular, most of the algorithms, that are gradient based, can only find a solution in the same homotopy class of the initial guess (\citealp*{lavalle2006planning}). There has been a significant progress to address this issue and, in particular, the ideas of dividing the global optimal trajectory planning problem into simpler subproblems and of using numerical optimization to compute locally optimal trajectories have been explored in, e.g., \citep{park2015homotopy,kuderer2014online}.

Sampling-based approaches emerged to handle systems with high dimensional state spaces, and they became  the most popular approaches in the planning literature techniques, representing the practical way to tackle the problem.
The basic idea is to sample states (nodes) in the continuous state space and connect these nodes with trajectories in the collision-free space, building a roadmap in the form of a graph or a tree of feasible trajectories. These algorithms avoid an explicit representation of obstacles by using a collision check module that allows to determine the feasibility of a tentative trajectory. They are not complete, but they satisfy the \emph{probabilistic completeness} property, i.e., they return a solution with a probability converging to one as the number of samples grows to infinity, if such a solution exists. 
\\
Probabilistic Road Map (PRM), introduced by \citet{kavraki1996probabilistic}, and Rapidly exploring Random Trees (RRT), introduced by \citet{lavalle2001randomized}, were the first popular sampling-based planners. PRM first creates a graph in the free configuration space by randomly sampling nodes and connecting them to the already existing ones in the graph using a local planner. The graph can then be used to answer multiple queries, where in each query a start node and a goal node are added to the graph and a path connecting the two nodes is looked for. RRT, on the other hand, incrementally builds a tree starting from a given node, returning a solution as soon as the tree reaches the goal region, hence providing a fast on-line implementation.
In all the different formulations of sampling-based planners, a steering function is required to design a trajectory (edge in the tree terminology) connecting two nodes of the tree.

Considering the quality of the solution, an important progress has been made with the introduction of RRT$^\star$ (Rapidly exploring Random Tree star) and PRM$^\star$ (Probabilistic Road Map star), which have been proven to be \emph{asymptotically optimal}, i.e., the probability of finding an optimal solution, if there exists one, converges to 1 as the tree cardinality grows to infinity, \citep{karaman2011sampling}.  The main idea of these algorithms is to ensure that each node is connected to the graph optimally, possibly rewiring the graph by testing connections with pre-existing nodes that are in a suitably defined  neighborhood.
The same strategy applies to kinodynamic planning \citep{karaman2010optimal} as well, with the additional difficulty that  when optimality is required,  implementing the steering function involves solving a \emph{two point boundary value problem} (TPBVP), which is computationally challenging especially when dealing with complex dynamics, such as for non-holonomic robots, in presence of actuation constraints.
In the context of kinodynamic planning RRT$^\star$ and PRM$^\star$ cannot be considered in the same way. In fact, PRM$^\star$ is limited to symmetric costs and to those systems for which the cost associated to a TPBVP is conserved when the boundary pairs are swapped. Note that, nonholonomic systems do not belong to this class.\\
Considering instead RRT$^\star$, it must be noticed that for various dynamical systems, such as non-holonomic vehicles, the presence of kinodynamic constraints makes the constrained optimization problem that the steering function has to solve extremely complex.\\
To deal with this computational complexity, some effort has been made towards developing effective steering functions for different types of dynamical systems. \citet{webb2013kinodynamic} have obtained the closed-form analytical solution for a minimum time minimum energy optimization problem for systems with linear dynamics, and extended it to non-linear dynamics using first-order Taylor approximation. Other works \citep{perez2012lqr,goretkin2013optimal} have focused on approximating the solution for systems with linearizable dynamics, by locally linearizing the system and applying linear quadratic regulation (LQR).\\
Some recent attempts have been made towards optimality without formulating and solving a  TPBVP, as well. For example, algorithms like Stable Sparse RRT$^\star$ (SST$^\star$) \citep{li2016asymptotically} have proved asymptotic optimality given access only to a forward propagation model. The idea is to iteratively sample a set of controls and final times instead of explicitly solving the {BVP}. Similarly, a variant of RRT$^\star$ \citep{hwan2011anytime} uses a shooting method without a steering function to improve the solution by pruning branches from the tree. If a sampled node has a lower cost compared to another one that is close by and that shares the same parent, the pre-existing node is pruned from the tree and its branches are connected to the newly sampled node, or they are pruned completely if they are not collision-free. This approach generates feasible but inherently suboptimal solutions. Other works on extending RRT$^\star$ to handle kinodynamic constraints include limiting the volume in the state space from which nodes are selected by tailoring it to the considered dynamical system in order to improve computational effectiveness \citep{karaman2013sampling}.
\\
Nevertheless, solving the TPBVP for an arbitrary nonlinear system remains challenging and typically calls for numerical solvers. The algorithms that account for a nonlinear optimization tool, like for example {ACADO} toolkit \citep{Houska2011a}, {GPOPS-II} \citep{patterson2014gpops}, etc., commonly use \emph{Sequential Quadratic Programming} (SQP) \citep{boggs1995sequential} for solving the TPBVPs numerically, and embed it as a subroutine in the sampling based planning framework such as in RRT* \citep{stoneman2014embedding} or in Batch-Informed-Trees star (BIT$^\star$) \citep{gammell2014bit}.

A different class of algorithms, aiming at optimality, is based on graph search and adopt a gridding approach. The main idea is to discretize the state space, building a grid, and compute a graph. The motion planning problem is then recast into finding the best sequence of motions by traversing this graph with an optimal search algorithm like A$^\star$ \citep{pearl1984heuristics}. This graph is often represented by a state lattice, a set of states distributed in a regular pattern, where the connections between states are provided by feasible/optimal trajectories \citep{pivtoraiko2009differentially}. \citet{likhachev2009planning} improved the idea of state lattice by using a multi-resolution lattice such that the portion of the graph that is close to either the initial or the goal state has a higher resolution than the other parts. These approaches have been successfully applied to several robotic systems and found to be effective for dynamic environments \citep{likhachev2009planning,dolgov2010path}. However, these algorithms are \emph{resolution optimal}, such that the optimality is guaranteed up to the grid resolution. Furthermore, their computational effectiveness is highly related to the resolution: the finer is the grid, the higher the branching factor, and thus the computational time and the required memory to execute a graph traversal algorithm.

\subsection{Contribution of the paper}

The main contribution of this paper is proposing an algorithm, called \RRTMP, which extends  RRT$^\star$ by introducing a database of pre-computed motion primitives in order to avoid the online solution of a constrained TPBVP for the edge computation. The database is composed of a set of trajectories, each one connecting an initial state to a final one in a suitably defined grid. By sampling in the gridded state space, the implementation of the steering function adopted for growing and rewiring the RRT$^\star$ tree reduces to the search of a motion primitive in a pre-computed Look Up Table (LUT).\\
The proposed approach is applicable to any dynamical system described by differential equations and subject to analytical constraints, for which edge design can be formulated as a TPBVP. Notably, when a model of the robot is not available, the database can be derived directly from experimental trajectories.

The main difference of the proposed approach, with respect to existing algorithms that use a database of pre-computed trajectories, e.g., search based approaches as in \citep{pivtoraiko2009differentially}, is that it leverages on a dynamic tree whose size depends only on the number of samples, but not on the number of motion primitives that affects the accuracy in the approximation of the robot kinematic and dynamic characteristics. As a consequence, memory consumption to store the tree and computation time to determine a solution on a given tree can be bounded selecting an appropriate maximum number of samples, without introducing undesired constraints in the robot action space.\\
Search based approaches, instead, have to strongly limit the action space, keeping the number of motion primitives low, as the branching factor of the graph, i.e., the number of edges generated expanding each node, depends on the size of the database.

The effectiveness of \RRTMP\, has been validated in simulation, showing that the time required to build the tree is greatly reduced with the introduction of a LUT. This represents a promising result for online applications, especially in dynamic environments where the planner has to generate a new trajectory in response to changes in the obstacle-free state space.

An analysis of the probabilistic completeness and optimality properties of \RRTMP\,  is also provided. This involves a two-step procedure where we assess how close the proposed sample-based solution gets to the optimal one in the gridded state space as the number of samples grows to infinity, and how it gets close to the optimal solution in the continuous state space as the gridding gets finer and finer.

\subsection{Paper structure}
The paper is organized as follows. Section \ref{sec:Prob_state} introduces a formal description of the problem. In Section \ref{sec:algorithm} the proposed ${\tt {RRT^\star \text{\underline{\hspace{0.2cm}}} MotionPrimitives}}$ algorithm is explained in detail. An analysis of its probabilistic completeness and optimality properties is presented  in Section \ref{sec:analysis}. A numerical validation of \RRTMP\, is provided in Section \ref{sec:results}. Finally, some concluding remarks are drawn in Section \ref{sec:conc}.

\section{PROBLEM STATEMENT} \label{sec:Prob_state}
In this work, dynamical systems with state vector $\qvect \in \mathbb{R}^d$ and control input $\uvect \in \mathbb{R}^m$, governed by
\begin{equation} \label{eqn:system_dynamics}
\dot{\qvect}(t)=f \left(\qvect(t),\uvect(t)\right)
\end{equation}
where $f$ is continuously differentiable as a function of both arguments, are considered.
The control input $\uvect(t)$ is subject to  actuation constraints, and the admissible control space is denoted as $U \subset \mathbb{R}^m$. The state $\qvect(t)$ is constrained in the set $Q \subset \mathbb{R}^d$, and initialized with $\qvect(0)=\qvect_0\in Q$. Both $U$ and $Q$ are assumed to be compact. An open subset $Q_{goal}$ of $Q$ represents the goal region that the state has to reach.

A \emph{trajectory} of system \eqref{eqn:system_dynamics} is defined by the tuple $\mathbf{z}=\left(\qvect(\cdot), \uvect(\cdot),\tau\right)$, where $\tau$ is the duration of the trajectory, and $\qvect(\cdot):[0,\tau] \rightarrow Q $ and $\uvect(\cdot):[0,\tau] \rightarrow U$ define the state and control input evolution along the time interval $[0,\tau]$, satisfying the differential equation \eqref{eqn:system_dynamics} for $t \in [0,\tau]$,  the initial condition $\qvect(0)=\qvect_0$, and the final condition $\qvect(\tau)\in Q_{goal}$.

Obstacles are represented via an open subset $Q_{obs}$ of $Q$.
The free space is then defined as $Q_{free} := Q \setminus Q_{obs}$, and the assumption $\qvect_0 \in Q_{free}$ is enforced.\\
A trajectory $\mathbf{z}=\left(\qvect(\cdot), \uvect(\cdot),\tau\right)$ of system \eqref{eqn:system_dynamics} is  said to be \emph{collision free}, if it avoids collisions with obstacles, i.e., $\qvect(t) \in Q_{free}$, $t \in [0,\tau]$.\\
The set of collision free trajectories is denoted as $Z_{free}$.

An \emph{optimal kinodynamic motion planning problem}  can then be formalized as finding a feasible and collision free trajectory $\mathbf{z}^\star = \left(\qvect^\star(\cdot),\uvect^\star(\cdot),\tau^\star\right) \in Z_{free}$ that is \emph{optimal} according to a cost criterion $J(\mathbf{z}):Z_{free} \rightarrow \mathbb{R}_{\geq 0}$  that is expressed as
\begin{displaymath}
J(\mathbf{z}) = \int\limits_{0}^{\tau}g\left(\qvect(t),\uvect(t)\right)\mathrm{d}t
\end{displaymath}
where $g:Q\times U\rightarrow \mathbb{R}_{\geq 0}$ is an instantaneous cost function.  \BSnote{We assume that trajectories joining two different states have a non-zero cost. This is for instance the case in minimum time optimization where $g(\qvect, \uvect)=1$ and the trajectory duration $\tau$ is one of the optimization variables of the problem.}

\begin{remark}[translation invariance property]\label{rem:inv}
	In the context of motion planning, system \eqref{eqn:system_dynamics} represents the robot equations of motion and, consequently, its state vector $\qvect$ includes the robot position with respect to a given absolute reference frame.
	In the following, a translation invariance property is supposed to hold. This means that, if obstacles are neglected and a pair of initial and final states and the associated optimal trajectory $\mathbf{z}^\star = \left(\qvect^\star(\cdot),\uvect^\star(\cdot),\tau^\star\right)$ are considered, by shifting the origin of the coordinate system and rewriting all relevant quantities -- including system dynamics \eqref{eqn:system_dynamics}, and initial and final states -- in the new coordinate system and applying input $\uvect^\star(\cdot)$, the optimal robot trajectory is obtained, which is $\mathbf{z}^\star$ rewritten in the new coordinates.
\end{remark}

\section{RRT$^\star$ WITH MOTION PRIMITIVES} \label{sec:algorithm}
The approach here proposed is based on previous works on search-based \citep{pivtoraiko2009differentially,likhachev2009planning}, and sampling-based \citep{karaman2011sampling,karaman2010optimal} methods, and combines them in a novel way.\\
In particular, it relies on a uniform discretization of the state space, and on the computation of a finite set of motion primitives by solving a constrained optimization problem with boundary conditions on the  grid points of a smaller (uniform) grid. The motion primitives are then embedded in the RRT$^\star$ algorithm, where they are used to connect the randomly generated nodes to the tree, thus eliminating the need of solving online challenging and time consuming TPBVPs.

\subsection{Database of Motion Primitives} \label{sec:database_of_MotionPRIM}
The database of motion primitives is built by gridding the continuous state space in order to obtain a finite set of boundary conditions (initial and final states), and by solving offline a constrained boundary value optimization problem for each pair. The resulting set of optimal trajectories is then used repeatedly online, implementing a procedure that, when an edge connecting two nodes is requested by the planner, simply picks up from the database a suitable trajectory.

Given a state tuple $\qvect \in Q$, $\qvect = [\boldsymbol{\pi}^T, \dots]^T $, including the robot position $\boldsymbol{\pi}$, motion primitives are computed for each pair of initial and final states $(\bar{\qvect}_0,\bar \qvect_f)$ with $\bar{\qvect}_0=[\bar{\boldsymbol{\pi}}_0^T, \dots]^T$.
Given a boundary value pair $(\bar\qvect_0,\bar\qvect_f)$, a motion primitive is computed by solving the following optimization problem
\begin{align}
&\underset{\uvect(\cdot), \tau}{\text{mininimize}} && \int\limits_{0}^{\tau}g\left(\qvect(t),\uvect(t)\right)\mathrm{d}t\label{eqn:cost}\\
&\text{subject to }&&\dot{\qvect}(t)=f\left(\qvect(t),\uvect(t)\right) \nonumber\\
&&&\uvect(t)\in U, \, t \in [0,\tau] \nonumber\\
&&&\qvect(t)\in Q_{free}, \,  t \in [0,\tau] \nonumber \\
&&&\qvect(0)=\bar\qvect_0,\;\qvect(\tau)=\bar\qvect_f \nonumber
\end{align}

Finally, the database is generated by considering distinct pairs of initial and final states $(\bar\qvect_0^i,\bar\qvect_f^i)$, $i=1,\dots,N$, and computing the corresponding set of motion primitives  $\setLUT = \{\mathbf{z}^\star_i = \left(\qvect^\star_i(t),\uvect^\star_i(t),\tau^\star_i\right), \, i=1,\dots,N \} $ with the associated costs $\costLUT=\{C^\star_i, \, i=1,\dots,N\}$.

Note that, thanks to the translation invariance property introduced in Remark \ref{rem:inv}, the size of the database can be kept small while covering a wide range of the space where the robot is moving. Indeed, one can, without loss of generality, set the initial position $\bar{\boldsymbol{\pi}}_0$ to $\bar{\boldsymbol{\pi}}_0=0$ when building the database, and recover the optimal trajectory for an arbitrary initial position ${\boldsymbol{\pi}}_0$ by simply centring the coordinate system in ${\boldsymbol{\pi}}_0$.

\begin{figure}[hpbt]
	\centering
	\includegraphics[width=0.8\linewidth]{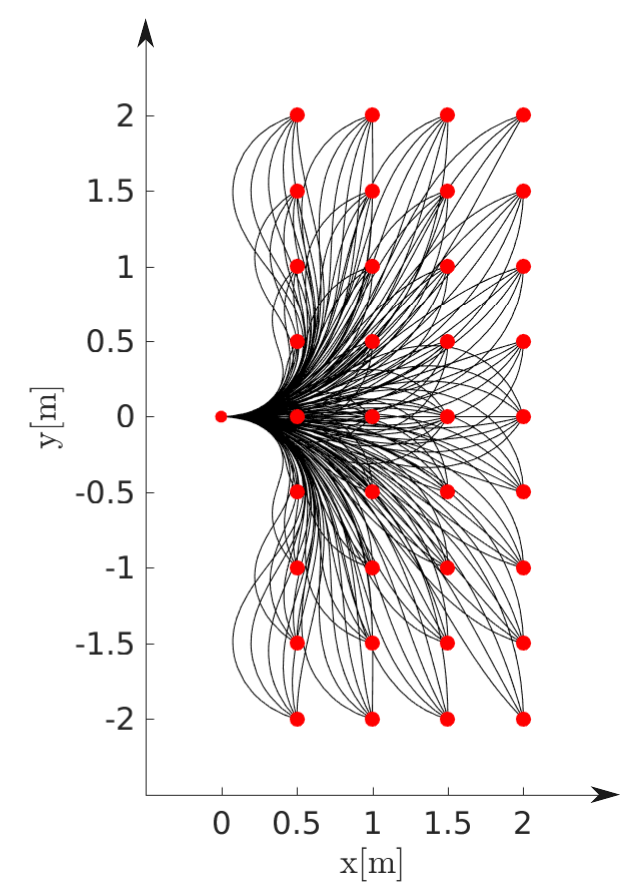}
	\caption{A subset of the motion primitives computed for a 3D search space $(x,y,\theta)$. Red dots correspond to the initial and final positions, and black lines represent the resulting trajectories for different final orientations ($\theta$). }
	\label{fig:subset_database}
\end{figure}

\begin{example} \label{ex:unicycle}
	Consider, as an example, a planning problem for a unicycle robot characterised by a 3D search space $(x,y,\theta)$, including the position $(x,y)$ and the orientation $\theta$, and by a 2D actuation space $(v,\omega)$, constituted by the linear velocity $v$ and the angular velocity $\omega$.\\
	Motion primitives are computed solving the following TPBVP
	\begin{align}
	&\underset{v(\cdot), \omega(\cdot), \tau}{\text{mininimize}} && \int\limits_{0}^{\tau} \left( 1 + 0.5v(t)^2 + 0.5\omega(t)^2 \right) \mathrm{d}t\nonumber\\[0.5em]
	&\text{subject to }&&\dot{x}(t)=v(t)\cos\left(\theta(t)\right) \nonumber\\
	&&&\dot{y}(t)=v(t)\sin\left(\theta(t)\right) \nonumber\\
	&&&\dot{\theta}(t)=\omega(t) \nonumber\\[0.5em]
	&&&v(t)\in \left[0,2\right], \, t \in [0,\tau] \nonumber\\
	&&&\omega(t)\in \left[-2,2\right], \, t \in [0,\tau] \nonumber\\[0.5em]
	&&&x(0)=\bar{x}_0,\; y(0)=\bar{y}_0,\; \theta(0)=\bar{\theta}_0 \nonumber\\
	&&&x(\tau)=\bar{x}_f,\; y(\tau)=\bar{y}_f,\; \theta(\tau)=\bar{\theta}_f \nonumber
	\end{align}
	for different initial and final poses.\\
	Figure \ref{fig:subset_database} shows a subset of these motion primitives, characterized by trajectories starting from $\bar{x}_0=\bar{y}_0=\bar{\theta}_0=0$.\\
	As can be seen in Figure \ref{fig:subset_database}, with the dynamical system and cost function considered in this example, motion primitives, corresponding to boundary conditions that are symmetric with respect to the $x$-axis, are symmetric. A further analysis reveals that the same property holds for the $y$-axis as well, and that symmetric primitives are characterized by the same cost.
\end{example}

\begin{figure}[hpbt]
	\centering
	\includegraphics[width=0.95\linewidth]{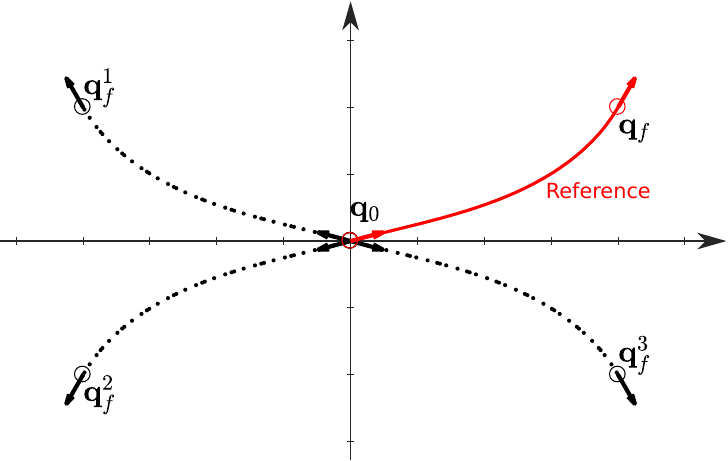}
	\caption{``Reference trajectory'' corresponding to $(\qvect_0,\qvect_f)$ (red solid line), and symmetric trajectories defined by the pair of boundary conditions $(\qvect_0,\qvect_f^1)$, $(\qvect_0,\qvect_f^2)$, $(\qvect_0,\qvect_f^3)$.}
	\label{fig:ref_traj}
\end{figure}

\begin{remark}
	When for the considered dynamical system and cost criterion stronger invariance properties hold, like the axis symmetry in Example \ref{ex:unicycle}, the size of the database can be further reduced by storing only a few ``reference trajectories'', and generating all the others using the invariance transformation.\\
	Figure \ref{fig:ref_traj} shows an example, referred to the TPBVP considered in Example \ref{ex:unicycle}, where the trajectories represented by the pairs $(\qvect_0,\qvect_f^1)$, $(\qvect_0,\qvect_f^2)$, $(\qvect_0,\qvect_f^3)$ are characterised by the same cost and can be easily mapped to a ``reference trajectory'' corresponding to the boundary value pair $\left(\qvect_0,\qvect_f\right)$. In this case, the size of the database can be further reduced storing only the ``reference trajectory''.\\
\end{remark}

\subsection{Search Space design}
In order to take advantage of the pre-computed database of motion primitives in sampling-based planning, the search space of the planner has to be defined appropriately so that every time the planner needs to connect two nodes, the corresponding optimal trajectory can be found in the database.
To guarantee that this is indeed the case, the search space of the planner is uniformly gridded as the region where motion primitives are built.
The translation invariance property\footnote{This approach can be easily extended in case stronger invariance properties hold.} can then be exploited, as any optimal trajectory which connects a pair of initial and final states (in the discretized search space) can be computed by first shifting the initial and final states so that the initial robot position corresponds to the zero position, then picking a suitable motion primitive in the database, and finally shifting the motion primitive so as to get back to the original reference coordinate system. Translation invariance, jointly with uniform gridding, allow  a reduced number of motion primitives to cover the entire (discretized) search space.
\BSnote{Note that, as the resolution of the database and the uniform grid size of the search space are coupled, we often use these two terms interchangeably.}

Determining optimal state space discretization depends on the specific application, and is out of the scope of this work. We shall assume here that the state space grid should include the initial state $\qvect_0$, at least a grid point in the goal region, and few points in the free space $Q_{free}$.
Moreover, in order to find a solution that reaches the goal region, the algorithm should be able to search within all homotopy classes \BSnote{that are feasible given the robot footprint}. In other words, one should be able to represent in the grid space  all sets of trajectories in the continuous state space that can be obtained applying a smooth transformation and lead to the goal region. Missing a homotopy class could highly deteriorate performance in terms of achieved cost.\\
In Section \ref{sec:results} some numerical examples are provided, in which different grids are used for solving the same case study and results are compared.

\begin{algorithm}[!t]
	$ { Q_T} \leftarrow \{ \qvect_0\}$, ${ E_T} \leftarrow \emptyset$, ${n} \leftarrow 1$\\	
	\While {$ n \leq N $} {
		$\qvect_\mathrm{rand} \leftarrow {\tt SAMPLE}(Q_\mathrm{free})$\\
		$Q_\mathrm{near} \leftarrow {\tt NEAR\_ NODES(\qvect_\mathrm{rand})}$\\
		$\qvect_\mathrm{best} \leftarrow {{\tt EXTEND}(Q_\mathrm{near},\qvect_\mathrm{rand})}$	\\
		\If{$\qvect_{rand} \not\in Q_T \land \qvect_\mathrm{best} \neq \emptyset $ }
		{$Q_T \leftarrow Q_T \cup \{ \qvect_\mathrm{rand}\}$\\
			$E_T \leftarrow E_T \cup \{\left(\qvect_\mathrm{best},\qvect_\mathrm{rand}\right)\}$\\
			$n \leftarrow n+1 $ \\
			$E_T \leftarrow {{\tt REWIRE}(Q_T, E_T, \qvect_\mathrm{rand},Q_\mathrm{near})}$}
		\ElseIf{\BSnote{$\qvect_{rand} \in Q_T$}}{
			\BSnote{$\qvect_{prev} \leftarrow {\tt PARENT}(\qvect_{rand}) $} \\
			\If {$\qvect_{best} \neq \qvect_{prev}$}{
				$E_T \leftarrow (E_T \setminus \{(\qvect_{prev}, \qvect_{rand})\}) \cup \{\qvect_{best},\qvect_{rand}\} $ \label{alg:extend_qrandInTree} \\
				$E_T \leftarrow {{\tt REWIRE}(Q_T, E_T, \qvect_\mathrm{rand},Q_\mathrm{near})}$
			}
		}					
	}
	\Return $Q_T,E_T$
	\caption{${\tt {RRT^\star \text{\underline{\hspace{0.2cm}}} MotionPrimitives}}$\label{algorithm:RRT*_motionprimitives}}
\end{algorithm}

\subsection{Motion Planning}
This section introduces \RRTMP\, (Algorithm \ref{algorithm:RRT*_motionprimitives}), the proposed variant of RRT$^\star$ integrating the database of motion primitives for the computation of a collision-free optimal trajectory (cf. Section \ref{sec:Prob_state}).

As in RRT$^\star$, \RRTMP\, is based on the construction of a random tree $T=(Q_T,E_T)$ where $Q_T\subset Q$ is the set of \emph{nodes}, and $E_T$ is the set of \emph{edges}. Nodes are states and edges are optimal trajectories, each one connecting a pair of origin and destination nodes and solving the TPBVP in \eqref{eqn:cost}.\\
The tree $T$ is expanded for a maximum number of iterations $N$, defined by the user, starting from $Q_T=\{\qvect_0\}$, where  $\qvect_0\in Q_{free}$ is the initial state, and $E_T=\emptyset$, as described in Algorithm \ref{algorithm:RRT*_motionprimitives}.\\
Every node $ \qvect\in Q_T$ is connected to $\qvect_0$ via a single sequence of intermediate nodes $\qvect_i \in Q_T$, $i=1, \dots, n-1$, $n\le N$, and associated edges  $e_i=e_{\qvect_i,\qvect_{i+1}} \in E_T$, $i=0,1, \dots, n-1$, with $\qvect_{n}=\qvect$.\\
One can then associate to this sequence a cost $C(\rightarrow \qvect_n)$ given by
\begin{displaymath}
C(\rightarrow \qvect_n) = \sum\limits_{i=0}^{n-1}C(e_i)
\end{displaymath}
where $C(e_i)$ denotes the cost associated with edge $e_i\in E_T$ and computed as in \eqref{eqn:cost}.

\begin{algorithm}[!t]
	$Q_\mathrm{near} \leftarrow \emptyset$ \\
	$\boldsymbol{\pi}_{rand}\leftarrow {\tt GetPosition}(\qvect_{rand}) $ \\
	\For{$\forall \qvect \in Q_T$}{			
		\If{$\qvect \in {\tt BoundingBox}(\boldsymbol{\pi}_{rand})$}{					
			\If{$C(e_{\qvect_{rand},\qvect}) \leq l(n) \lor C(e_{\qvect,\qvect_{rand}}) \leq l(n)$}{
				$Q_\mathrm{near} \leftarrow Q_\mathrm{near} \cup \{ \qvect\}$}
		}
	}
	\Return $Q_\mathrm{near}$
	\caption{${\tt {NEAR\_NODES}}$\label{algorithm:near_nodes}}
\end{algorithm}

Tree growing is based on four main steps -- random sampling, finding near nodes, extending the tree, and rewiring -- that are described in the following.

\subsubsection*{\textbf{Random sampling}}
A random state $\qvect_{rand}$ is sampled from the free state space $Q_{free}$ according to a uniform distribution by \BSnote{${\tt SAMPLE}\left( Q_\mathrm{free} \right)$}. Unlike the original RRT$^\star$ algorithm, however, the node is not sampled from the continuous state space, but from its discretization according to a uniform grid. For this reason, there is also a non zero probability that the same state $\qvect_{rand}$ is sampled again in the next iterations of the algorithm.

\subsubsection*{\textbf{Near nodes} (Algorithm \ref{algorithm:near_nodes})}
In RRT$^\star$ a random state $\qvect_{rand}$ can be connected only to a node that is within the set of its near nodes.\\
For Euclidean cost metrics, the set of near nodes is defined as a $d$-dimensional ball centered at $\qvect_{rand}$ of radius
\begin{displaymath}
\gamma_{ball} = \gamma_{RRT^\star}\left(\log(n)/n\right)^{1/d}
\end{displaymath}
where $n$ is the tree cardinality at the current iteration of the algorithm and $\gamma_{RRT^\star}$ is a suitable constant selected as
\begin{displaymath}
\gamma_{RRT^\star} > 2 \left(1+1/d\right)^{1/d}\left(\mu(Q_{free})/\zeta_d\right)^{1/d}
\end{displaymath}
$\mu(Q_{free})$ and $\zeta_d$ denoting the volume of the free configuration space and of the unit ball, respectively, in a $d$-dimensional Euclidean space.\\
For non-Euclidean cost metrics, the distance between two states is represented by the optimal cost of the trajectory that connects them, and near nodes are selected from a set of reachable states, $Q_{reach}$, defined as the set of states that can be reached from $\qvect_{rand}$ or that can reach $\qvect_{rand}$ with a cost that satisfies some threshold value. More specifically,
\begin{equation}
\begin{split}
Q_{reach}= \{\qvect \in Q : C(e_{\qvect_{rand},\qvect}) \leq l(n) \lor \\
C(e_{\qvect,\qvect_{rand}}) \leq l(n)\}
\end{split}
\label{eqn:Qreachdef}
\end{equation}
where $e_{\qvect_i,\qvect_j}$ denotes the edge from $\qvect_i$ to $\qvect_j$, and $l(n)$ is a cost threshold that decreases over the iterations of the algorithm as $l(n)=\gamma_l \left(\log n/n\right)$ such that a ball of volume $\gamma^{d}\left(\log n/n\right)$ is contained within $Q_{reach}$, where $\gamma_l$ and $\gamma$ are suitable constants \citep[see][for further details]{karaman2010optimal}.
\begin{figure*}[htbp]
	\centering
	\subfigure[original query]{\label{fig:roto-trans-a}\includegraphics[width=.24\linewidth]{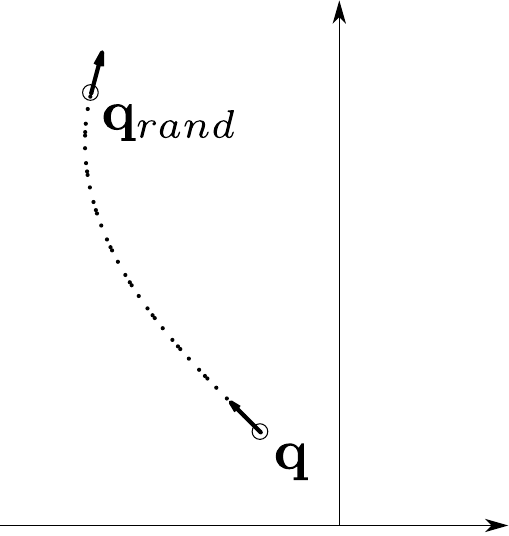}} \hspace{0.4cm}
	\subfigure[translation to origin]{\label{fig:roto-trans-b}\includegraphics[width=.24\linewidth]{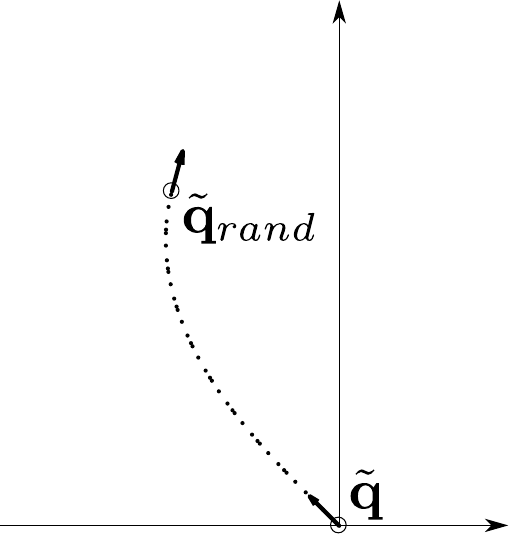}} \hspace{0.4cm}
	\subfigure[inverse transformation and edge design]{\label{fig:roto-trans-d}\includegraphics[width=.24\linewidth]{rot1.pdf}}
	\caption{Steps involved in the coordinate transformation within the ${\tt FindTrajectory}$ procedure.\label{fig:roto-trans}}
\end{figure*}

In \RRTMP, however, the set $Q_{reach}$ has to be further constrained to ensure that there is a pre-computed trajectory in the database for each connection in the set of near nodes, that is thus redefined as follows
\begin{equation*}
Q_{near}=Q_{reach} \cap {\tt BoundingBox}(\boldsymbol{\pi}_{rand})
\end{equation*}
where ${\tt BoundingBox}(\boldsymbol{\pi}_{rand})$ denotes the box of grid points in the state space $Q$ adopted for the database construction, with the origin of the reference coordinate system shifted from $\bar \pi_0=0$ to  $\pi_{rand}$, which is the robot position associated to state $\qvect_{rand}$ and obtained by using ${\tt GetPosition}$.\\
If $Q_{near}$ occurs to be an empty set, the algorithm continues to the next iteration selecting a new $\qvect_{rand}$.

\subsubsection*{\textbf{Extend} (Algorithm \ref{algorithm:extend})}
The tree is extended to include $\qvect_{rand}$ by selecting the node $\qvect_{best}\in Q_T$ such that the edge $e_{\qvect_{best},\qvect_{rand}}$ connects $\qvect_{rand}$ with a minimum cost collision free trajectory.\\
$\qvect_{best}$ is determined as follows
\begin{equation*}
\qvect_{best} = \argmin\limits_{\qvect \in Q_{feasible}} C(\rightarrow \qvect) + C(e_{\qvect,\qvect_{rand}})
\end{equation*}
where $Q_{feasible}\subseteq Q_{near}$ is the set of nodes $\qvect$ that belong to $Q_{near}$ and such that the trajectory connecting $\qvect$ to $\qvect_{rand}$ is collision free, i.e.,
\begin{equation*}
Q_{feasible} = \{\qvect \in Q_{near} | {\tt CollisionFree} (e_{\qvect,\qvect_{rand}})=1 \}
\end{equation*}
where ${\tt CollisionFree}: E_T \to \{0,1\}$ is a function that returns 1 for an edge that is collision free, 0 otherwise.\\
The selection of an edge from the database, connecting $\qvect$ to $\qvect_{rand}$, is a peculiarity of \RRTMP, and is performed by the ${\tt FindTrajectory}$ function as follows:
\begin{enumerate}
	\item a translation is applied to the pair of initial and final states $\left(\qvect,\qvect_{rand}\right)$ (Figure \ref{fig:roto-trans-a}), obtaining the normalized pair $\left(\tilde{\qvect},\tilde{\qvect}_{rand}\right)$, such that the resulting $\tilde{\qvect}$ has the position $\tilde{\boldsymbol{\pi}}$ corresponding to the null vector (Figure \ref{fig:roto-trans-b});
	\item  a query is executed on the database to look for the trajectory $\mathbf{z}_i \in \setLUT$ and the cost $C(e_{\tilde{\qvect},\tilde{\qvect}_{rand}}) \in \mathbb{R}$;
	\item the inverse of the previous translation is applied in order to recover the trajectory connecting the actual pair of boundary values $\left(\qvect,\qvect_{rand}\right)$, determining the edge $e_{\qvect,\qvect_{rand}}$ (Figure \ref{fig:roto-trans-d}).
\end{enumerate}
At the end of this procedure, if $\qvect_{rand}$ is not already in the tree, then it is added to the tree together with the minimum cost edge, i.e., $Q_T$ is replaced by $\{\qvect_{rand} \}\cup Q_T$ and $E_T$ by $\{e_{\qvect_{best},\qvect_{rand}}\}\cup E_T$ (see steps 7 and 8 in Algorithm \ref{algorithm:RRT*_motionprimitives}). If $\qvect_{rand}$ is already in the tree and if the computed $\qvect_{best}$ is different from the current parent node, $\qvect_{prev}$, of $\qvect_{rand}$, \BSnote{given by ${\tt PARENT}(\qvect_{rand})$}, then the previous edge, $e_{\qvect_{prev},\qvect_{rand}}$, is replaced by the new edge,  $e_{\qvect_{best},\qvect_{rand}}$ (see step \ref{alg:extend_qrandInTree} in Algorithm \ref{algorithm:RRT*_motionprimitives}).

\begin{algorithm}[!t]
	$\qvect_\mathrm{best}\leftarrow \emptyset, e_\mathrm{best} \leftarrow \emptyset, \tt c_\mathrm{best} \leftarrow \infty $ \\
	\For {$ \qvect \in  Q_\mathrm{near} $} {
		$ z, {C(e_{\qvect, \qvect_\mathrm{rand} })} \leftarrow {\tt FindTrajectory}\left(\qvect, \qvect_\mathrm{rand} \right)$\\		
		\If{${C}(e_{\qvect, \qvect_\mathrm{rand} }) < c_\mathrm{best}  $ }{
			\If{$\tt CollisionFree(e_{\qvect, \qvect_\mathrm{rand} })$}{
				$ \qvect_\mathrm{best} \leftarrow \qvect$\\
				$e_\mathrm{best} \leftarrow z$\\
				$ c_\mathrm{best} \leftarrow {C}(e_{\qvect, \qvect_\mathrm{rand} }) $			
			}		}	
		}
		\Return $\qvect_\mathrm{best}$
		\caption{${\tt {EXTEND}}$\label{algorithm:extend}}
	\end{algorithm}

	\begin{algorithm}
		\For {$ \qvect \in Q_\mathrm{near} $} {
			$ z, {C(e_{\qvect_\mathrm{rand},\qvect})} \leftarrow {\tt FindTrajectory}\left(\qvect_\mathrm{rand},\qvect\right)$\\		
			\If{$C \left(\rightarrow \qvect_{rand}\right) + {C}(e_{\qvect_\mathrm{rand},\qvect}) < C \left(\rightarrow \qvect\right) $ }{
				\If{$\tt CollisionFree(e_{\qvect, \qvect_\mathrm{rand} })$}{
					$\qvect_{parent} \leftarrow {\tt Parent}\left(\qvect\right)$ \\
					$e_{prev} \leftarrow e_{\qvect_{parent},\qvect} $ \\
					$e \leftarrow e_{\qvect_{rand},\qvect}$ \\
					$E_T= \{E_T \setminus e_{prev}\} \cup \{e\}$}	
			}			
		}
		\Return $E_T$
		\caption{${\tt {REWIRE}}$\label{algorithm:rewire}}
	\end{algorithm}

\subsubsection*{\textbf{Rewiring} (Algorithm \ref{algorithm:rewire})}
In order to ensure that \BSnote{all node pairs are connected by an optimal sequence of edges}, every time a new node $\qvect_{rand}$ is added to the tree, a check is performed to verify if an already existing node can be reached from this newly added node with a smaller cost.\\
Therefore, $\forall \qvect \in Q_{near}$ if $e_{\qvect_{rand},\qvect}$ is collision free, and the following conditions hold
\begin{align*}
&C(e_{\qvect_{rand},\qvect}) \leq l(n)\\
&C(\rightarrow \qvect_{rand}) + C(e_{\qvect_{rand},\qvect}) \leq C(\rightarrow \qvect)
\end{align*}
the tree is rewired, i.e.,
\begin{displaymath}
E_T \leftarrow \{E_T \setminus e_{prev}\} \cup \{e_{\qvect_{rand},\qvect}\}
\end{displaymath}
where $e_{prev}$ is the previous edge connecting the node $\qvect$ to the tree.

\subsubsection*{\textbf{Termination and best sequence selection}}
After the maximum number of iterations is reached the procedure to build the tree terminates.\\
The best trajectory is selected as the node sequence reaching the goal region with the minimum cumulative cost $C$.\\
Note that, using a discretized search space limits the number of nodes that can be sampled, once all of them have been sampled the tree cardinality does not increase any more, but the algorithm can still continue updating the edges to ensure that each node is connected with the best possible parent node.



\section{COMPLETENESS AND OPTIMALITY ANALYSIS}\label{sec:analysis}
In this section,  probabilistic completeness of the proposed planning algorithm and optimality of the solution are discussed. Furthermore, some results to assess how close the solution obtained using a discretized state space and motion primitives is to the optimal trajectory computed considering a continuous state space are provided.

Let $Q^{\Delta}$ define the set of grid points that represent the discretized state space\footnote{In this section, a $\Delta$ superscript is used to denote all variables that are associated with the grid state space, so as to distinguish them from their continuous state space counterpart.} and similarly $Q^{\Delta}_{free} := Q^\Delta \cap Q_{free} $ represents the free discrete state space. Assuming that the discretization step size is chosen properly, then, the collection of all grid points $\qvect \in Q^{\Delta}_{free}$ that can be reached from $\qvect_0$ by concatenating a sequence of motion primitives in $Q_{free}$ is a non empty set. We shall denote this set as $V^\Delta_{free}$ and its cardinality as $N^\Delta$. Note that the end points of the concatenated motion primitives are grid points in $ Q^{\Delta}_{free}$ and hence they belong to  $V^\Delta_{free}$.

Let $\mathcal{G}^\Delta_{free} = (V^{\Delta}_{free}, E^\Delta_{free})$ be a graph where the set of nodes is given by $V^\Delta_{free}$ defined before and the set of edges $E^\Delta_{free}$ is the collection of all the (possibly translated) motion primitives iteratively built as follows: starting from $\qvect_{0}$ consider all the (translated) motion primitives that lie in $Q_{free}$ and connect $\qvect_0$ to all possible grid points in $Q^\Delta_{free}$, and, then, continue with the same strategy for all of the newly reached grid points iteratively until it is not possible to further expand the graph.

Finally, $Q_{goal}^{\Delta}$ denotes the set of those grid points of $V^{\Delta}_{free}$ that belong to $Q_{goal}$. The motion planning problem using the grid representation admits a solution if $Q_{goal}^{\Delta}$ is not empty since this means that there exists a way of reaching a state in $Q_{goal}$ starting from $\qvect_{0}$ with the available motion primitives. In the following derivations we assume that $Q_{goal}^{\Delta} \neq \emptyset$.

Similarly to RRT$^\star$, \RRTMP\, generates a tree $T$ based on the random samples extracted from $Q^\Delta_{free}$. However, unlike RRT$^\star$, the nodes and edges added to the tree belong respectively to $V^\Delta_{free}$ and $E^\Delta_{free}$, so that the obtained tree $T$ is a sub-graph of $\mathcal{G}^\Delta_{free}$, i.e., $T \subset \mathcal{G}^\Delta_{free}$. The subscript $i$ is used to denote the generated tree and the cost of the lowest cost trajectory represented in that tree after $i$-th iterations, i.e., $T_i$ and $c_i$, respectively.

Since $Q_{goal}^{\Delta} \neq \emptyset$, then, there exists at least an \emph{optimal branch} of $\mathcal{G}^\Delta_{free}$ composed of the ordered sequence of nodes
\begin{equation*}
S^\star := \{\qvect_0^\star,\qvect_1^\star,\dots,\qvect_k^\star\},
\end{equation*}
which represents a \emph{resolution optimal $\Delta$-trajectory}, a minimum cost trajectory that starts at the initial state $\qvect_0$ and ends in the goal region, i.e., $\qvect_k^\star \in Q_{goal}^{\Delta}$, such that  $\qvect_j^\star \in V^{\Delta}_{free}$ and $e_{\qvect_{j-1}^\star,\qvect_{j}^\star} \in E^\Delta_{free}$, for any $j=1, \dots,k$. Let $c^{\star\Delta}$ denote the cost of this optimal trajectory, i.e., $C(\rightarrow \qvect_k^\star)= c^{\star\Delta}$, which is named \emph{resolution optimal $\Delta$-cost}.

The goal of \RRTMP\, can then be reformulated as that of generating a tree that contains an optimal branch $S^\star$ to reach $Q^{\Delta}_{goal}$ that is represented in $\mathcal{G}^\Delta_{free}$. Note that, one could in principle build $G^\Delta_{free}$ and apply an exhaustive search on it. However, this can still be an issue due to the combinatorial nature of the problem, in particular due to the branching caused by the dimensionality of the state space and the number of nodes contained in the graph.

In this section the quality of the solution obtained by \RRTMP\, is analyzed by addressing the following questions:
\begin{enumerate}
	\item resolution optimality: if there exists a resolution optimal $\Delta$-trajectory $S^\star$ in $\mathcal{G}^\Delta_{free}$, then, is it possible to obtain such a trajectory?
	\item asymptotic optimality: how close is the resolution optimal $\Delta$-cost to the cost of the optimal trajectory, as the grid resolution increases and the grid converges to the continuous state space?
\end{enumerate}

\begin{theorem}	\label{thm:resOpt}
	As the number of iterations goes to infinity the cost of the trajectory returned by \RRTMP\, converges to the {resolution optimal $\Delta$-cost} with a probability equal to 1, i.e.,
	\begin{equation*}
	\mathbb{P}\left( \left\{\lim_{i \rightarrow \infty} c_i = c^{\star\Delta} \right\} \right)=1.
	\end{equation*}
	Moreover, the expected number of iterations required to converge to $c^{\star\Delta}$ is upper bounded by $k|Q_{free}^{\Delta}|$, where $k$ is the length of an optimal branch of $\mathcal{G}^\Delta_{free}$.
\end{theorem}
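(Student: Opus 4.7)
The plan is to reduce the convergence statement to a coupon-collector problem over the finite grid $Q_{free}^{\Delta}$. The key observation is that ${\tt SAMPLE}$ draws uniformly from $Q_{free}^{\Delta}$, so every state $\qvect \in Q_{free}^{\Delta}$ is drawn at each iteration with constant probability $p = 1/|Q_{free}^{\Delta}|$, independently of history. Denote the nodes of the optimal branch as $S^\star = \{\qvect_0^\star, \qvect_1^\star, \ldots, \qvect_k^\star\}$ with $\qvect_0^\star = \qvect_0$. Then it suffices to show two things: (a) once all $k+1$ nodes of $S^\star$ have been sampled at least once, the tree $T_i$ eventually contains the ordered sequence of edges of $S^\star$, so that $c_i = c^{\star\Delta}$; (b) the expected first-hitting time for sampling every element of $S^\star$ is bounded by $k |Q_{free}^{\Delta}|$.

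For claim (a), I would proceed by induction on $j = 0, 1, \ldots, k$, showing that the subtree rooted at $\qvect_0$ eventually reaches $\qvect_j^\star$ through the optimal sub-branch $\qvect_0^\star, \ldots, \qvect_j^\star$. The inductive step leverages the fact that $e_{\qvect_{j-1}^\star, \qvect_j^\star}$ is the database-optimal motion primitive between those grid points and is collision-free by definition of $E^\Delta_{free}$. Once both $\qvect_{j-1}^\star$ and $\qvect_j^\star$ lie in $Q_T$, every subsequent re-sampling of $\qvect_j^\star$ triggers ${\tt EXTEND}$, which considers $\qvect_{j-1}^\star$ as a candidate parent provided it belongs to the near-set; and every re-sampling of $\qvect_{j-1}^\star$ triggers ${\tt REWIRE}$, which reassigns $\qvect_j^\star$ to $\qvect_{j-1}^\star$ if the cost decreases. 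Iterating this along the branch yields $C(\rightarrow \qvect_k^\star) = c^{\star\Delta}$, and the number of re-samplings required is almost surely finite by Borel--Cantelli applied to the finite-state sampling process.

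Claim (b) follows from a standard coupon-collector bound for collecting a specified subset of $k$ out of $|Q_{free}^{\Delta}|$ coupons: the expected time equals $|Q_{free}^{\Delta}|\bigl(H_{|Q_{free}^{\Delta}|} - H_{|Q_{free}^{\Delta}|-k}\bigr)$, which is at most $k |Q_{free}^{\Delta}|$ since each of the $k$ terms in the harmonic difference is bounded by $1$. Combining (a) and (b) yields both the stated upper bound on expected iterations and the almost-sure convergence statement.

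The hard part will be making claim (a) fully rigorous against the threshold $l(n) = \gamma_l (\log n / n)$ that shrinks the ${\tt NEAR\_NODES}$ set as iterations progress. Since the costs of the primitives along $S^\star$ are fixed positive quantities, for large $n$ the condition $C(e_{\qvect_{j-1}^\star, \qvect_j^\star}) \leq l(n)$ will eventually fail, potentially preventing the required ${\tt EXTEND}$ and ${\tt REWIRE}$ reassignments. Resolving this will require arguing that the stitching of the optimal branch happens within a window of iterations during which $l(n)$ still exceeds the maximum edge cost along $S^\star$, or that the ${\tt BoundingBox}$ containment inherited from the database construction is, in the finite discretized setting, the effective neighbourhood criterion that drives the result.
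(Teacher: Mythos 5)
Your overall mechanism for almost-sure convergence is the same as the paper's: the optimal branch $S^\star$ gets stitched into the tree in order, edge by edge, via {\tt EXTEND} (when $\qvect_j^\star$ is sampled after the sub-branch to $\qvect_{j-1}^\star$ is already optimal) or {\tt REWIRE} (when $\qvect_{j-1}^\star$ is re-sampled after $\qvect_j^\star$ entered the tree suboptimally), and each useful sample occurs with probability $1/|Q_{free}^{\Delta}|$ per iteration. The paper formalizes this as an absorbing Markov chain with $k$ transient stages. Your flagged concern about the shrinking threshold $l(n)$ is real, and the paper disposes of it exactly the way you guessed in your last sentence: it simply \emph{assumes} $\gamma_l$ is chosen large enough that $Q_{reach}$ contains the ${\tt BoundingBox}$ even at $n=N^\Delta$, so the bounding box is the effective neighbourhood criterion throughout.

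The genuine gap is in the expected-iterations bound. Your decomposition (a)$+$(b) does not deliver $k|Q_{free}^{\Delta}|$: the coupon-collector time in (b) bounds only the first time every node of $S^\star$ has been sampled \emph{once}, which is necessary but not sufficient for $c_i=c^{\star\Delta}$. Your own claim (a) concedes this by saying the branch is only ``eventually'' assembled afterwards, with additional re-samplings needed so that each $\qvect_j^\star$ is (re)visited \emph{after} the sub-branch to $\qvect_{j-1}^\star$ has already been made optimal --- {\tt REWIRE} cannot reassign $\qvect_j^\star$ to $\qvect_{j-1}^\star$ at the optimal cost-to-come until $C(\rightarrow\qvect_{j-1}^\star)$ is itself optimal. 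Those extra, order-dependent iterations are not accounted for anywhere in your bound; in the worst realization (nodes of $S^\star$ first drawn in reverse order) the tree may contain none of the optimal edges beyond the first at the coupon-collection time. The fix is to abandon the coupon-collector reduction and use the sequential decomposition directly: let $\tau_j$ be the first iteration at which the optimal sub-branch $\{\qvect_0^\star,\dots,\qvect_j^\star\}$ lies in the tree; then $\tau_j-\tau_{j-1}$ is stochastically dominated by a geometric random variable with success probability $1/|Q_{free}^{\Delta}|$ (one specific sample suffices to advance the stage), and summing the $k$ expectations gives $k|Q_{free}^{\Delta}|$. This is precisely the paper's absorbing-chain computation, and it is the only part of your argument that needs replacing.
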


\begin{proof}
	\RRTMP\, returns the resolution optimal solution if it discovers an optimal branch, $S^\star = \{\qvect_0^\star,\qvect_1^\star,\dots,\qvect_k^\star\}$, defined on $\mathcal{G}_{free}^\Delta$. Assuming that the constant defining the cost threshold for selecting nearby nodes, $\gamma_l$, is selected large enough (e.g. $\gamma_l$ can be chosen so that $Q_{reach}$ defined in \eqref{eqn:Qreachdef} contains the ${\tt BoundingBox}$\
	$(\boldsymbol{\pi}_{rand})$ for $n=N^\Delta$), as soon as $\{\qvect_0^\star,\qvect_1^\star,\dots,\qvect^\star_{j-1}\}$ is a branch in the tree, $T=(Q_T,E_T)$, then the algorithm adds the edge $e_{\qvect_{j-1},\qvect_j}$ to the tree in one of the following two ways:
	\begin{enumerate}
		\item any time $\qvect_{j}^\star$ is sampled it is connected to $\qvect_{j-1}^\star$ by the {\tt EXTEND} procedure. In fact, as $e_{\qvect_{j-1}^\star,\qvect_j^\star}$ is part of the optimal sequence, there is no better way of reaching $\qvect_{j}^\star$ other than $e_{\qvect_{j-1}^\star,\qvect_j^\star}$.
		
		\item if $\qvect_{j}^\star$ is sampled before $\qvect_{j-1}^\star$ and connected to some node $\qvect_{prev}$, such that $\qvect_j^\star \in Q_T$ and $e_{\qvect_{prev},\qvect_j^\star} \in E_T$, the edge $e_{\qvect_{j-1}^\star,\qvect_j^\star}$ is selected any time $\qvect_{j-1}^\star$ is sampled, thanks to the {\tt REWIRE} procedure. 
	\end{enumerate}
	This property allows to model the process of determining the sequence $S^\star$ as an \emph{absorbing Markov chain} initialized at $\qvect_0^\star$ with $\qvect_k^\star$ as absorbing state and all intermediate $\qvect_j^\star$, $j=1,2, \dots k-1 $ that are transient states. There is a positive probability $\mathcal{P}_j$ of advancing in the sequence, i.e., moving from $\qvect^\star_{j-1}$ to $\qvect^\star_j$, and a probability $1-\mathcal{P}_j$ of staying at the same state. Considering that at each iteration a new grid point is sampled from $Q_{free}^{\Delta}$ independently and according to a uniform distribution, each one has a probability $\frac{1}{|Q_{free}^{\Delta}|}$ of being extracted. Therefore, there is a probability $\mathcal{P}_j=\frac{1}{|Q_{free}^{\Delta}|}$ of advancing in the \emph{Markov chain} and, since all states are transient states apart from $\qvect_k^\star$ which is the absorbing state, the probability that the process is absorbed by $\qvect_k^\star$ tends to 1 as $i$ tends to infinity \citep{bertsekas2002introduction}. Moreover, there is a finite number of expected iterations, $k/\mathcal{P}_j=k\,|Q_{free}^{\Delta}|$, before the process is absorbed, i.e., before \RRTMP\, returns the resolution optimal solution.
\end{proof}

Clearly, the number of expected iterations increases with the depth of the solution, $k$, and the number of states represented in the grid. However, one advantage of \RRTMP, as RRT$^\star$ is the possibility of obtaining a solution rapidly and possibly improving its quality within the allowed computing time.

\begin{corollary}
	\RRTMP\, is probabilistically resolution complete, as the number of iterations goes to infinity the algorithm will return a solution to the motion planning problem, if there exists one in $\mathcal{G}_{free}^\Delta$, with a probability 1.
\end{corollary}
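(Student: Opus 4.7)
The plan is to derive this corollary directly from Theorem \ref{thm:resOpt}, since probabilistic resolution completeness is strictly weaker than resolution optimality: returning the minimum-cost trajectory obviously entails returning some trajectory.

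First I would unpack the hypothesis. The assumption that a solution exists in $\mathcal{G}^\Delta_{free}$ means that $Q_{goal}^\Delta \neq \emptyset$ and that at least one ordered sequence of nodes in $V^\Delta_{free}$ connected by edges in $E^\Delta_{free}$ joins $\qvect_0$ to some $\qvect \in Q_{goal}^\Delta$. This is precisely the setting under which the resolution optimal $\Delta$-trajectory $S^\star$ and its finite cost $c^{\star\Delta} < \infty$ are well-defined, so the hypotheses of Theorem \ref{thm:resOpt} are met.

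Next I would invoke Theorem \ref{thm:resOpt} to conclude $\mathbb{P}(\lim_{i\to\infty} c_i = c^{\star\Delta}) = 1$. Adopting the natural convention that $c_i = +\infty$ whenever the tree $T_i$ contains no branch reaching $Q_{goal}^\Delta$, convergence of $c_i$ to the finite value $c^{\star\Delta}$ forces $c_i$ to be finite for all sufficiently large $i$ along almost every sample path. Finiteness of $c_i$ is, by definition, equivalent to the existence within $T_i$ of a node sequence from $\qvect_0$ to some $\qvect \in Q_{goal}^\Delta$ connected by edges in $E_T \subset E^\Delta_{free}$, i.e., a bona fide solution to the motion planning problem. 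This yields the claim with probability $1$.

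There is no substantive obstacle beyond this bookkeeping; the only mild care needed is to make explicit the $+\infty$ convention for $c_i$ before a goal-reaching branch is discovered, so that the almost-sure convergence statement of Theorem \ref{thm:resOpt} transfers cleanly into the almost-sure existence of a feasible solution in the tree. As a closing remark, I would observe that the upper bound $k|Q_{free}^\Delta|$ on the expected number of iterations given in Theorem \ref{thm:resOpt} also bounds the expected time to first discover \emph{any} feasible solution, since the first time an optimal branch is completed is an upper bound on the first time a feasible branch reaches $Q_{goal}^\Delta$.
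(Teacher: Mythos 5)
Your proposal is correct and follows essentially the same route as the paper, which states this corollary without a separate proof precisely because it is an immediate consequence of Theorem~\ref{thm:resOpt}: almost-sure convergence of $c_i$ to the finite value $c^{\star\Delta}$ forces the tree to eventually contain a goal-reaching branch. Your explicit handling of the $c_i = +\infty$ convention before a feasible branch is found is a sensible piece of bookkeeping that the paper leaves implicit.
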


The remaining of this section deals with the relation between the resolution optimal $\Delta$-trajectory returned by \RRTMP\, as the number of iterations grows to infinity and the truly optimal trajectory in the continuous state space. To this purpose, we shall focus with the case when there are no actuation constraints and enforce the following assumptions regarding the properties of the dynamical system \eqref{eqn:system_dynamics} and the existence of a solution.

\begin{assumption} The following properties hold for the dynamical system in \eqref{eqn:system_dynamics}
	\begin{itemize}
		\item the system is small-time locally attainable (STLA)\footnote{A system is STLA from a state $\qvect \in Q$ if $\forall T>0$ the reachable set of states from $q$ in time $0<t \leq T$, $\mathcal{R}(\qvect, \leq T)$ contains a d-dimensional subset of $\mathcal{N}$, where $\mathcal{N}$ denotes the set of neighborhood states in terms of Euclidean distance,  \citep{choset2005principles}.}; 	
		\item function $f(\cdot)$, representing the system dynamics, is Lipschitz continuous with Lipschitz constant $\mathcal{K}_f$;
		\item function $C(\cdot)$, assigning a cost to an edge, \BSnote{satisfies} the following Lipschitz-like continuity condition with Lipschitz constant $\mathcal{K}_c$:
		\begin{displaymath}
		\left| C(e_{\qvect_0,\qvect_1})- C(e_{\tilde{\qvect}_0, \tilde{\qvect}_1})\right| \leq \mathcal{K}_c \left\| \left| \begin{array}{c}
		\qvect_0 \\ \qvect_1 \\ \end{array} \right| - \left| \begin{array}{c} \tilde{\qvect}_0 \\ \tilde{\qvect}_1 \\ \end{array} \right| \right\|
		\end{displaymath}
		for each pair of edges $e_{\qvect_0,\qvect_1}$ and $e_{\tilde{\qvect}_0, \tilde{\qvect}_1}$.
	\end{itemize}
	\label{assum:dyn_syst}
\end{assumption}

\BSnote{Derivations in the rest of this section apply straightforwardly to state spaces that are Euclidean, and can be generalized to state spaces that are manifolds if the following assumption holds. }
\begin{assumption} The state space manifold of \BSnote{system \eqref{eqn:system_dynamics} with $d$ state variables} is a subspace of the $d$-dimensional Euclidean space, $\mathbb{R}^d$, therefore can be locally treated as $\mathbb{R}^d$. \label{assum:state_space}
\end{assumption}

With a slight abuse of the previously introduced notation, in the rest of this section we use the term ``trajectory'' for the state space component of the tuple $\mathbf{z}$ defined in Section \ref{sec:Prob_state}.
In order to compare the trajectory returned by \RRTMP\, and the optimal trajectory in the continuous state space, firstly, trajectories whose points are all away from obstacles by a certain distance are considered. For this reason, the definition of obstacle clearance of a trajectory, i.e., the minimum distance between obstacles and points belonging to the trajectory, has to be introduced.

\begin{definition}[$\epsilon$-obstacle clearance]
	Given a trajectory $\sigma(t)$, $t \in [0,T]$, if the ball $\mathcal{B}_{\epsilon}\left(\sigma(t)\right)$ of radius $\epsilon$ and centred at $\sigma(t)$ is strictly inside $Q_{free}$, for any $t \in [0,T]$, then, the $\epsilon$-obstacle clearance property holds for $\sigma$.
\end{definition}

\begin{definition}[$\epsilon$-free space]
	Let $\sigma(t):[0,T] \rightarrow Q_{free}$ be a trajectory which has $\epsilon$-obstacle clearance, the $\epsilon$-free space along $\sigma$ is given by
	\begin{equation*}
	Q^{\epsilon}_{\sigma} := \bigcup_{t \in [0,T]}\mathcal{B}_{\epsilon}\left(\sigma(t)\right)
	\end{equation*}
\end{definition}

\noindent Then, a set of trajectories that are called to be $\epsilon$-similar to $\sigma$ can be introduced. 

\begin{definition}[$\epsilon$-similarity]
	Any trajectory $\tilde{\sigma}(t):[0,\tilde{T}] \rightarrow Q_{free}$ is said to be $\epsilon$-similar to $\sigma$ if it lies in the $Q^{\epsilon}_{\sigma}$ free-space, i.e., if
	\begin{displaymath}
	\tilde{\sigma}(t) \in Q^{\epsilon}_{\sigma}, \, t \in [0,\tilde{T}].
	\end{displaymath}
\end{definition}
\begin{figure}[t!]
	\centering
	\includegraphics[width=0.75\linewidth]{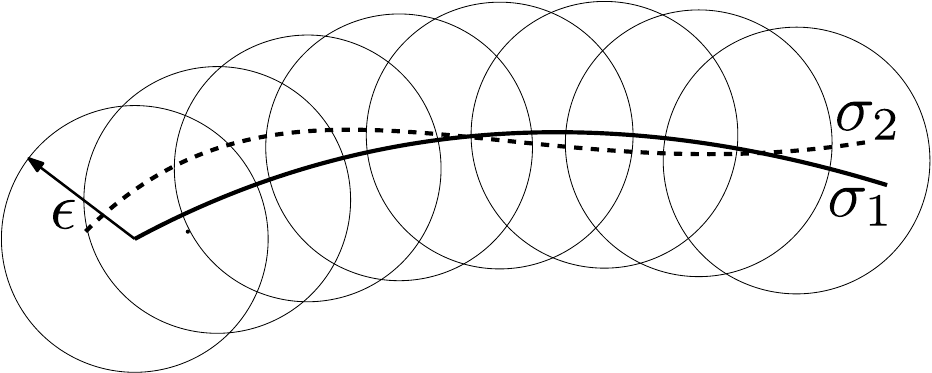}
	\caption{An example of a trajectory $\sigma_2$ that is $\epsilon$-similar to a trajectory $\sigma_1$.}
	\label{fig:epsSimilar}
\end{figure}

\noindent Figure~\ref{fig:epsSimilar} shows an example of a trajectory that is $\epsilon$-similar to another one.

Note that having an $\epsilon$-free space along a trajectory is not sufficient to guarantee the existence of $\epsilon$-similar trajectories ($\epsilon$-dynamic clearance), as this existence depends also on the properties of the dynamical system \eqref{eqn:system_dynamics}. The following definition relates the $\epsilon$-similarity with the $\epsilon$-free space through the system dynamics.

\begin{definition}[$\epsilon$-dynamic clearance]
	\BSnote{Given a trajectory $\sigma(t) \colon [0,T] \rightarrow Q_{free}$ which has $\epsilon$ obstacle clearance, if for any pair of time instants $ t_1, t_2$, such that $0 \leq t_1 < t_2 \leq T$, there exists a set of states inside a ball of radius $\alpha \epsilon$, with $0 < \alpha \leq 1$, centered at $\sigma(t_2)$ that are reachable from $\sigma(t_1)$ according to dynamics \eqref{eqn:system_dynamics} without leaving the $\epsilon$-free space around $\sigma(t)$, then $\sigma$ has $\epsilon$-dynamic clearance.}
\end{definition}

\begin{figure}[t!]
	\centering
	\includegraphics[width=0.75\linewidth]{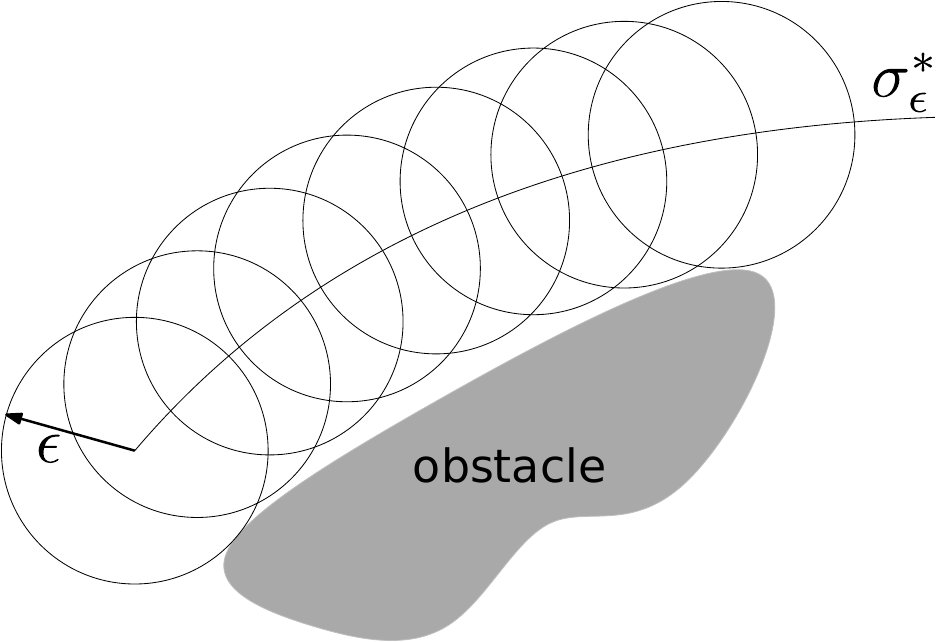}
	\caption{$\sigma_\epsilon^\star$ is the optimal trajectory with at least $\epsilon$-obstacle and dynamic clearance.}
	\label{fig:covering_balls}
\end{figure}

Let $\Sigma_\epsilon$ denote all the trajectories that solve the motion planning problem and have at least $\epsilon$-obstacle and dynamic clearance. Let $c^\star_\epsilon$ denote the minimum cost over all $\Sigma_\epsilon$, which corresponds to the \BSnote{$\epsilon$\textit{-optimal trajectory}}, $\sigma^\star_\epsilon$. Note that, as $\epsilon$ tends to zero $\sigma^\star_\epsilon$ converges to the truly optimal trajectory in the continuous state space. Due to the discretized nature of \RRTMP\, it is not possible to converge to the optimal trajectory in the continuous state space, however in the following it is proven that \BSnote{the graph, $\mathcal{G}^\Delta_{free}$, contains an $\epsilon$-similar trajectory, $\sigma^{\Delta}_\epsilon$, to the optimal trajectory $\sigma^\star_\epsilon$ with a particular $\epsilon$-clearance.} Consequently, this result will be used to show that as the resolution of the grid increases, and as $\epsilon$ converges to zero, the \emph{resolution optimal $\Delta$-cost} will converge to the truly optimal cost, i.e., $c^\star$.

\begin{theorem}
	\BSnote{Let $\bar{\epsilon} >0$ be smaller than half of the shortest side of the bounding box and such that the system \eqref{eqn:system_dynamics} admits an $\epsilon$-optimal trajectory for any $\epsilon \leq \bar{\epsilon}$, then,}
	\begin{itemize}
		\item for a sufficiently fine gridding, $\mathcal{G}^\Delta_{free}$ contains an $\epsilon$-similar trajectory to $\sigma^\star_\epsilon$, \BSnote{$\forall \epsilon \leq \bar{\epsilon}$},
		\item as the grid resolution increases, the \emph{resolution optimal $\Delta$-cost} converges to $c^\star$.
	\end{itemize}
	\label{thm:AsmpOpt}
\end{theorem}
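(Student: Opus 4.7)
The plan is to prove the theorem constructively: for each admissible $\epsilon$ I would build a sequence of grid waypoints along $\sigma^\star_\epsilon$ such that the concatenation of the corresponding motion primitives both lies in the $\epsilon$-tube around $\sigma^\star_\epsilon$ and has cost close to $c^\star_\epsilon$; the convergence statement then follows by first sending $\Delta \to 0$ for fixed $\epsilon$ and then $\epsilon \to 0$.

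First (waypoint construction). I would fix $\epsilon \leq \bar\epsilon$, let $\alpha \in (0,1]$ be the constant from the definition of $\epsilon$-dynamic clearance, pick any $\beta \in (0,\alpha)$, and partition the time domain of $\sigma^\star_\epsilon$ as $0 = t_0 < \cdots < t_k = T$ so that $\|\sigma^\star_\epsilon(t_{j+1}) - \sigma^\star_\epsilon(t_j)\| \leq \beta\epsilon$; uniform continuity of $\sigma^\star_\epsilon$ on the compact $[0,T]$ ensures this is achievable with $k = \mathcal{O}(1/\epsilon)$. Taking a grid spacing $\Delta$ with $\Delta\sqrt{d}/2 < (\alpha-\beta)\epsilon$, each ball $\mathcal{B}_{(\alpha-\beta)\epsilon}(\sigma^\star_\epsilon(t_j))$ contains at least one grid point $\bar\qvect_j \in Q^\Delta$, which lies in $Q^\Delta_{free}$ by $\epsilon$-obstacle clearance (since $(\alpha-\beta)\epsilon < \epsilon$).

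Second (existence of an $\epsilon$-similar trajectory). For each consecutive pair $(\bar\qvect_j,\bar\qvect_{j+1})$, the triangle inequality places $\bar\qvect_{j+1}$ within $\alpha\epsilon$ of both $\sigma^\star_\epsilon(t_j)$ and $\sigma^\star_\epsilon(t_{j+1})$. The $\epsilon$-dynamic clearance property then furnishes a feasible trajectory, staying inside $Q^\epsilon_{\sigma^\star_\epsilon}$, from a neighbourhood of $\bar\qvect_j$ to $\bar\qvect_{j+1}$. Invoking STLA (to shift the initial state to $\bar\qvect_j$ exactly) and Lipschitz continuity of $f$ (to obtain Gr\"onwall-type bounds on the deviation of perturbed trajectories from a nominal one), I would argue that the optimal trajectory solving the TPBVP~\eqref{eqn:cost} between $\bar\qvect_j$ and $\bar\qvect_{j+1}$, i.e. the motion primitive returned by the database, also stays inside $Q^\epsilon_{\sigma^\star_\epsilon}$ for sufficiently small $\Delta$. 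Concatenating these primitives produces a path in $\mathcal{G}^\Delta_{free}$ lying in $Q^\epsilon_{\sigma^\star_\epsilon}$, which is $\epsilon$-similar to $\sigma^\star_\epsilon$, proving the first bullet.

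Third (cost convergence). Applying the Lipschitz-like bound on $C$ from Assumption~\ref{assum:dyn_syst} segment-wise, the cost of the motion primitive joining $\bar\qvect_j$ and $\bar\qvect_{j+1}$ differs from the optimal cost between $\sigma^\star_\epsilon(t_j)$ and $\sigma^\star_\epsilon(t_{j+1})$ by at most $\mathcal{K}_c$ times an $\mathcal{O}(\Delta)$ displacement, and the latter is dominated by the cost of $\sigma^\star_\epsilon\big|_{[t_j,t_{j+1}]}$ by optimality of the motion primitive. Summing over $j$ and telescoping yields
\begin{equation*}
c^{\star\Delta} \;\leq\; c^\star_\epsilon + k\,\mathcal{K}_c\, c_1\,\Delta,
\end{equation*}
with $c_1$ absorbing dimensional constants and $k = \mathcal{O}(1/\epsilon)$. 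Given $\eta > 0$, I would first choose $\epsilon$ with $c^\star_\epsilon - c^\star < \eta/2$ (using $c^\star_\epsilon \to c^\star$) and then choose $\Delta$ small enough that the second term is below $\eta/2$; combined with $c^{\star\Delta} \geq c^\star$ by global optimality, this yields $c^{\star\Delta} \to c^\star$ as the grid is refined.

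The main obstacle will be the middle step: ensuring that the \emph{cost-minimising} motion primitive between shifted grid endpoints stays inside the narrow tube $Q^\epsilon_{\sigma^\star_\epsilon}$ rather than merely in $Q_{free}$. The $\epsilon$-dynamic clearance definition guarantees only the existence of \emph{some} feasible trajectory reaching an $\alpha\epsilon$ ball around the target; controlling the optimal one requires combining STLA with a Gr\"onwall bound driven by Lipschitz continuity of $f$, and possibly refining the partition so that per-segment deviations never exhaust the tube radius. A secondary delicate point is the joint scaling of $\Delta$, $\epsilon$, and $k$: the grid must be refined fast enough (in particular $\Delta = o(\epsilon)$ for fixed $\epsilon$) that the accumulated cost overhead $k\mathcal{K}_c \Delta$ vanishes in the limit.
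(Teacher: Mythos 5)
Your proposal is correct in outline and follows the same skeleton as the paper's proof: place waypoints along $\sigma^\star_\epsilon$, find grid points in small balls around them, invoke $\epsilon$-dynamic clearance plus Lipschitz continuity of $f$ to connect consecutive grid points inside the tube $Q^\epsilon_{\sigma^\star_\epsilon}$, control the cost gap with the Lipschitz-like condition on $C$, and pass to the double limit. The genuine divergence is in how the number of segments is controlled. You space waypoints $O(\epsilon)$ apart, so $k=\mathcal{O}(1/\epsilon)$ and the overhead $k\,\mathcal{K}_c\,\mathcal{O}(\Delta)$ forces the coupling $\Delta=o(\epsilon)$; the paper instead spaces waypoints at the (fixed) bounding-box scale and bounds $M-1\leq c^\star_\epsilon/l_{min}$, where $l_{min}>0$ is the minimum cost to exit a shrunken bounding box -- this is exactly where the paper's standing assumption of non-zero cost between distinct states is used -- which yields the grid-independent bound \eqref{eqn:upper_bound_res_opt_cost} with overhead $\mathcal{O}(\epsilon)$ and no separate $\Delta$-versus-$\epsilon$ bookkeeping beyond requiring the $\delta$-balls to be nonempty. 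On the other hand, your telescoping comparison of each primitive against the TPBVP optimum between $\sigma^\star_\epsilon(t_j)$ and $\sigma^\star_\epsilon(t_{j+1})$ (which is at most the cost of the restriction of $\sigma^\star_\epsilon$) is actually more careful than the paper's step \eqref{eqn:cost_segments}, which tacitly identifies $c^\star_\epsilon$ with a sum of database edge costs. The difficulty you flag as the main obstacle -- that the \emph{cost-minimising} primitive between grid endpoints need not stay inside the $\epsilon$-tube -- is not resolved by the paper either: it is asserted there via the $\delta$-ball reachability argument with $\delta=\alpha\epsilon/(2\mathcal{K}_f)$, so your explicit acknowledgment is a point in your favour rather than a defect relative to the published argument. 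Two small items you should still add: verify that consecutive grid waypoints lie within the database's \texttt{BoundingBox} of their predecessor (automatic for your $O(\epsilon)$ spacing given $\bar\epsilon$ is below half the shortest box side, but it must be said, since otherwise no edge exists in $\mathcal{G}^\Delta_{free}$), and pin the endpoints $\tilde{\qvect}_0=\qvect_0$ and $\tilde{\qvect}_M\in Q_{goal}$, which the paper handles by treating the last segment separately in the bound on $M$.
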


\begin{figure}[t!]
	\centering
	\includegraphics[width=0.88\linewidth]{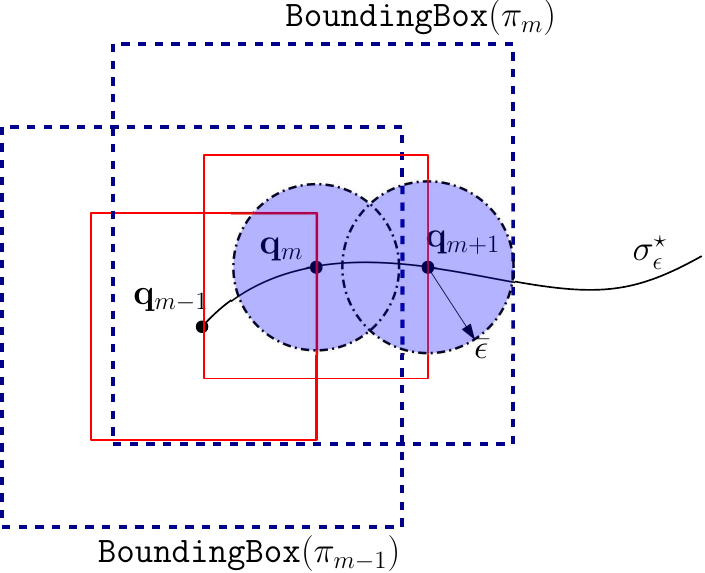}
	\caption{Consecutive samples taken along the $\epsilon$-optimal trajectory to design a $\{\mathcal{B}_{\delta}\}$ sequence. The corresponding ${\tt BB}(\boldsymbol{\pi}_m)$ are shown in red.}
	\label{fig:bbox seqn}
\end{figure}

\begin{proof}
\BSnote{Let us fix $\epsilon$, with $0 < \epsilon \leq \bar{\epsilon}$, and sample a set of states, $\{\qvect_m \colon m=0,1, \dots, M \}$, along the $\epsilon$-optimal trajectory, $\sigma_\epsilon^\star$, starting from the initial state and ending in the final state, in such a way that a ball of radius $\bar{\epsilon}$ centered at sample $\qvect_{m}$ would be contained in the {\tt BoundingBox}$(\boldsymbol{\pi}_{m-1})$ centered at the preceding sample $\qvect_{m-1}$ and touching its boundary for $ m=1, \dots, M-1$ (Figure~\ref{fig:bbox seqn}), till the final state is included within {\tt BoundingBox}$(\boldsymbol{\pi}_{M-1})$.}

\BSnote{As $\sigma_\epsilon^\star$ has $\epsilon$-dynamic clearance, it is true that for any sample $\qvect_{m}$ along the trajectory there exists a set of states within a ball of radius $\alpha\epsilon$ centered at $\qvect_{m+1}$ that is reachable without leaving $Q^\epsilon_{\sigma_\epsilon^\star}$, which is the $\epsilon$-free space along $\sigma_\epsilon^\star$. From Assumption \ref{assum:dyn_syst}, in particular from the Lipschitz continuity of the system dynamics, it follows that a sequence of non-overlapping balls, $\{\mathcal{B}_{\delta}\left(\qvect_m\right)\}$, centered at the samples along $\sigma_\epsilon^\star$ and characterized by radius $\delta$ where
\begin{equation}
\delta = \frac{\alpha \epsilon}{2 \mathcal{K}_f}
\label{eqn:deltaDefinition}
\end{equation}
can be determined such that any state within $\mathcal{B}_{\delta}\left(\qvect_{m}\right)$ can reach any state in $\mathcal{B}_{\delta}\left(\qvect_{m+1}\right)$ without leaving\footnote{It is assumed that $\mathcal{K}_f \geq 1$.} $Q^\epsilon_{\sigma_\epsilon^\star}$ \cite[see][]{khalil1996nonlinear,karaman2010optimal}.}
\BSnote{Assuming that the discretization is fine enough so that none of the $\mathcal{B}_{\delta}\left(\qvect_m\right)$ is empty, there exists a sequence of nodes $ \{ \tilde{\qvect}_m \colon m=0,1, \dots, M \} $ represented in $\mathcal{G}^\Delta_{free}$, such that $ \tilde{\qvect}_m \in \mathcal{B}_{\delta}\left(\qvect_{m}\right), m=0,1, \dots, M $, $\tilde{\qvect}_0 = \qvect_0$, $\tilde{\qvect}_M$ belongs to the goal region and the corresponding trajectory $\sigma_\epsilon^\Delta$ in $\mathcal{G}^\Delta_{free}$ is $\epsilon$-similar to $\sigma_\epsilon^*$ (Figure~\ref{fig:eps_balls}).}

\begin{figure}[t!]
		\centering
		\includegraphics[width=0.9\linewidth]{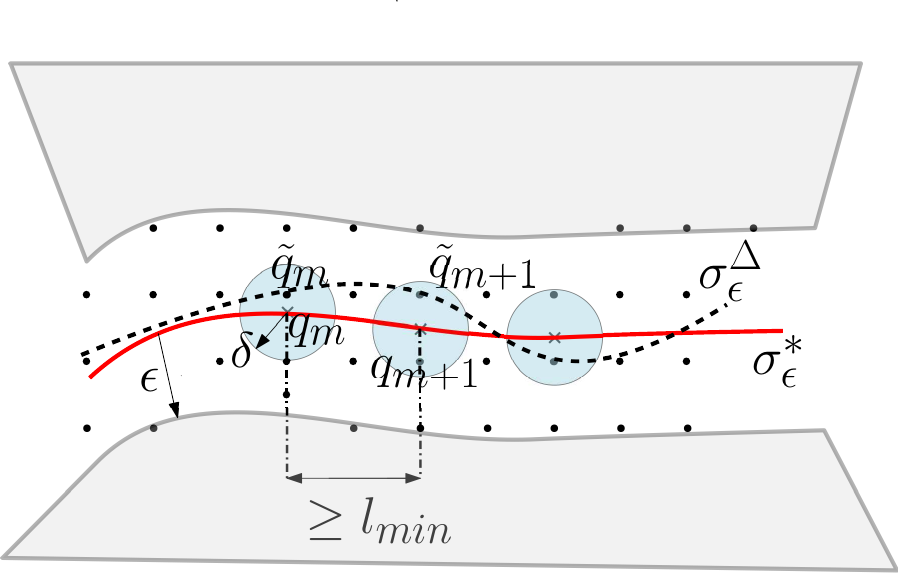}
		\caption{Construction of the ball sequence $\{\mathcal{B}_{\delta}\left(\qvect_m\right)\}$. Tiling $\sigma_\epsilon^\star$ with balls of radius $\delta$ centered at consecutive samples, there exist trajectories that are $\epsilon$-similar to $\sigma_\epsilon^\star$ represented on $\mathcal{G}^\Delta_{free}$. The cost of the trajectory connecting two consecutive samples is larger than or equal to $l_{min}$.}
		\label{fig:eps_balls}
	\end{figure}
	
\BSnote{The cost of the optimal trajectory can be defined as the sum of the costs of the trajectories connecting consecutive $\mathcal{B}_{\delta}\left(\qvect_m\right)$ balls, i.e.,
	\begin{equation}
	c_{\epsilon}^* = \sum_{m=1}^{M} C(e_{\qvect_{m-1} , \qvect_{m}}),
	\label{eqn:cost_segments}
	\end{equation}
	and similarly the cost of the $\sigma^{\Delta}_\epsilon$ is
	\begin{equation}
	c^{\Delta}_{\epsilon} = \sum_{m=1}^{M} C(e_{\tilde{\qvect}_{m-1}, \tilde{\qvect}_{m}}).
	\label{eqn:cost_segmentsEsimilar}
	\end{equation}
	Subtracting \eqref{eqn:cost_segments} from \eqref{eqn:cost_segmentsEsimilar}, the difference can be written as
	\begin{equation*}
	c^{\Delta}_{\epsilon} - c_{\epsilon}^*= \sum_{m=1}^{M} \left( C(e_{\tilde{\qvect}_{m-1}, \tilde{\qvect}_{m}}) - C(e_{\qvect_{m-1}, \qvect_{m}})\right).
	\end{equation*}
	By the Lipschitz continuity of the cost function given in Assumption~\ref{assum:dyn_syst} it follows that
	\begin{equation*}
	c^{\Delta}_{\epsilon} - c_{\epsilon}^* \leq \sum_{m=1}^{M} \mathcal{K}_c \delta,
	\end{equation*}
	where $\mathcal{K}_c$ is the Lipschitz constant for the cost function. Therefore,
	\begin{equation}
	c^{\Delta}_{\epsilon} \leq  c_{\epsilon}^* + \mathcal{K}_c M \delta.		
	\label{eqn:CostEpsSimiar}
	\end{equation}		
	We next derive an upper bound on the number of trajectory segments, $M$, using the lower bound on the cost of the trajectory connecting two consecutive samples, $\qvect_{m-1}$ and $\qvect_{m}$. To this purpose, let us define a smaller bounding box centered at $\qvect_{m}$ for each sample, which we shall denote as ${\tt BB}(\boldsymbol{\pi}_{m})$, such that the frontier of ${\tt BB}(\boldsymbol{\pi}_{m})$ intersects with $\sigma^\star_\epsilon$ at $\qvect_{m}$ (see Figure~\ref{fig:bbox seqn}) and all its sides are obtained by decreasing those of {\tt BoundingBox}$(\boldsymbol{\pi}_{m})$  of $2\bar \epsilon$ evenly. Then, we can define a minimum cost $l_{min}$ among all the trajectories that reach the frontier of ${\tt BB}(\boldsymbol{\pi}_0)$ from $\boldsymbol{\pi}_0$. Note that $l_{min}>0$ since the length of the shortest side of ${\tt BB}(\boldsymbol{\pi}_0)$ is larger than zero and we assumed non-zero cost for trajectories joining two different states.
Then, the cost $C(e_{\qvect_{m-1} , \qvect_{m}})$ of the trajectory connecting $\qvect_{m-1}$ and $\qvect_{m}$, for $m=1, \dots, M-1$, satisfies the following inequality 	
	\begin{equation}
	l_{min} \leq C(e_{\qvect_{m-1} , \qvect_{m}}).
	\label{eqn: lower bound on l}
	\end{equation}	
Note that we have to treat separately the last trajectory segment connecting $\qvect_{M-1}$ to $\qvect_{M}$ since the existence of an $\bar{\epsilon}$-ball centered at $\qvect_M$ and contained within ${\tt BoundingBox}\left( \boldsymbol{\pi_{M-1}}\right) $ can not be guaranteed as $\qvect_{M}$ is the final state which is fixed. Therefore, \eqref{eqn: lower bound on l} may not hold for the last trajectory segment. We can then rewrite equation \eqref{eqn:cost_segments} as follows 	
	\begin{equation*}
	c_{\epsilon}^* = \sum_{m=1}^{M-1} C(e_{\qvect_{m-1} , \qvect_{m}}) + C(e_{\qvect_{M-1},\qvect_{M}})
	\end{equation*}
	and lower bound the cost of $\sigma_\epsilon^\star$  as
	\begin{equation*}
	c_{\epsilon}^* \geq (M-1)l_{min} + 0.
	\end{equation*}
	Considering that the number of trajectory segments can be upper bounded by
	\begin{equation*}
	M-1 \leq \dfrac{c_{\epsilon}^*}{l_{min}},
	\end{equation*}
	from \eqref{eqn:CostEpsSimiar} it follows that
	\begin{equation*}
	c^{\Delta}_{\epsilon} \leq  \left(1 + \frac{\mathcal{K}_c\delta}{l_{min}}\right)  c_{\epsilon}^* + \mathcal{K}_c \delta.
	\end{equation*}
	Substituting the definition of $\delta$ given in \eqref{eqn:deltaDefinition}, we get
	\begin{equation*}
	c^{\Delta}_{\epsilon} \leq  \left(1 + \frac{\mathcal{K}_c \, \alpha \, \epsilon}{2 \, \mathcal{K}_f \, l_{min}}\right)  c_{\epsilon}^* + \dfrac{\mathcal{K}_c \, \alpha \, \epsilon}{2 \, \mathcal{K}_f},
	\end{equation*}
	which shows that the cost of the $\epsilon$-similar trajectory for a specific gridding is upper bounded by the cost of the $\epsilon$-optimal trajectory and the $\epsilon$-clearance.}	
	\BSnote{
	Now the cost $c^{\star\Delta}$ of the optimal trajectory that can be obtained given a particular discretization satisfies
	\begin{equation}
	c^{\star\Delta}\leq c^{\Delta}_{\epsilon}  
	\leq  \left(1 + \frac{\mathcal{K}_c \, \alpha \, \epsilon}{2\mathcal{K}_f \, l_{min}}\right)  c_{\epsilon}^* + \dfrac{\mathcal{K}_c \, \alpha \, \epsilon}{2\mathcal{K}_f} \label{eqn:upper_bound_res_opt_cost}
	\end{equation}	
	which provides an upper bound on the \emph{resolution optimal $\Delta$-cost}, $c^{\star\Delta}$.}
	
\BSnote{As the resolution of the discretization increases and as the grid converges to the continuous state space, $\epsilon$ can converge to zero.
Then, since $\mathcal{K}_f$ and $\mathcal{K}_c$ are constants and $l_{min}$ is fixed (it depends on the upper bound $\bar\epsilon$ on $\epsilon$), from \eqref{eqn:upper_bound_res_opt_cost} it follows that as $\epsilon$ goes to zero the \emph{resolution optimal $\Delta$-cost} converges to the cost $c^*$ of the optimal trajectory, i.e.,	
	\begin{equation}
	\lim_{\epsilon \rightarrow 0} c^{\star\Delta} = c^*.
	\end{equation}
}

\end{proof}

\BSnote{Theorem~\ref{thm:AsmpOpt} states that when the number of nodes in $\mathcal{G}^\Delta_{free}$ goes to infinity, i.e., when the uniform gridding converges to the continuous state space, the resolution optimal $\Delta$-cost converges to the cost $c^*$ of the optimal trajectory in the continuous state space without gridding. By Theorem~\ref{thm:resOpt}, it then follows that, when the uniform gridding converges to the continuous state space, as the number of iterations goes to infinity the cost of the trajectory returned by \RRTMP\, converges to the cost of the optimal trajectory. Furthermore, Theorem~\ref{thm:AsmpOpt} establishes an upper bound on the \emph{resolution optimal $\Delta$-cost} as a function of the cost of the $\epsilon$-optimal trajectory and the $\epsilon$-clearance such that as the grid resolution increases the upper bound decreases (see equation \eqref{eqn:upper_bound_res_opt_cost}).} However, as shown in  Theorem~\ref{thm:resOpt}, as the number of discrete states increase, \RRTMP\, will take more iterations to return the \emph{resolution optimal $\Delta$-trajectory}. Depending on the problem at hand, one should make the best compromise between the computing time (and size of the database) and the performance in terms of cost.



\begin{figure*}[!t]
	\centering	
	\subfigure[square gridding with coarse resolution]{\label{fig:grid_types-a}\includegraphics[width=.3\linewidth]{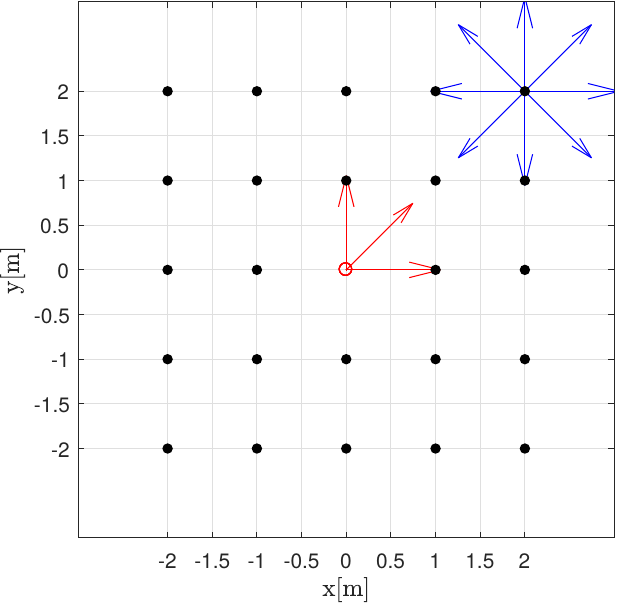}} \hfill
	\subfigure[square gridding with fine resolution]{\label{fig:grid_types-b}\includegraphics[width=.3\linewidth]{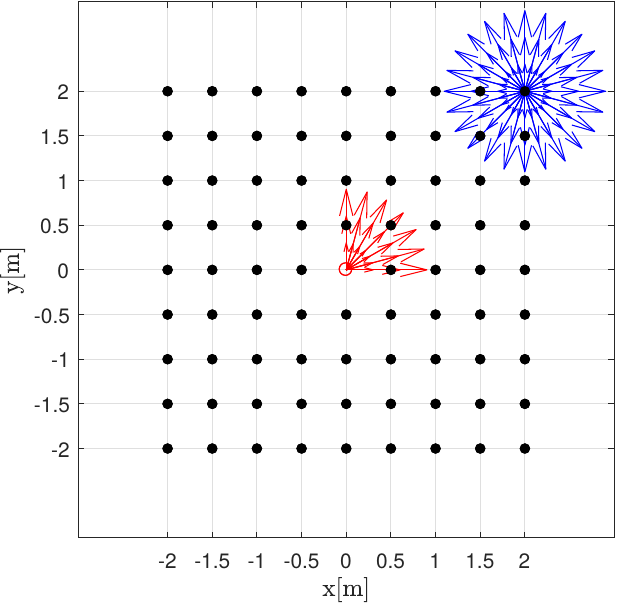}} \hfill
	\subfigure[diamond gridding with fine resolution]{\label{fig:grid_types-c}\includegraphics[width=.3\linewidth]{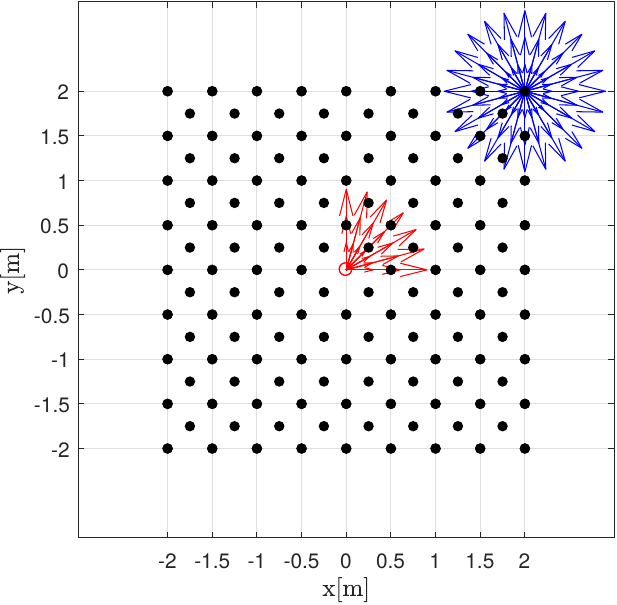}}
	\caption{The three grids used to generate the database. The red circle is the initial position, the blue dots the final ones. Red arrows represent the initial headings and velocities, blue arrows are the final ones. Different arrow sizes correspond to different velocities.\label{fig:grid_types}}
\end{figure*}

\section{NUMERICAL EXAMPLE} \label{sec:results}
In this section a numerical example is presented to show the effectiveness of the proposed algorithm.\\
A 4D state-space $(x,y,\theta,v)$ representing a unicycle like robot moving on a planar surface is considered. The robot is described by the following equations
\begin{equation}
\begin{cases}
\dot{x}(t)=v(t)\cos\theta(t)\\
\dot{y}(t)=v(t)\sin\theta(t)\\
\dot{\theta}(t)=w(t)\\
\dot{v}(t)=a(t)
\end{cases}
\label{eqn:unicycle}
\end{equation}
where $(x,y)$ is the position of the robot and $\theta$ the orientation with respect to a global reference frame, $v$ and $w$ are the linear and angular velocity, respectively. The control input is represented by $\uvect=[w,a]^T$, where $w$ and $a$ are angular velocity and linear acceleration, respectively.

\begin{figure}[hpbt]
	\centering
	\subfigure{\includegraphics[width=.99\linewidth]{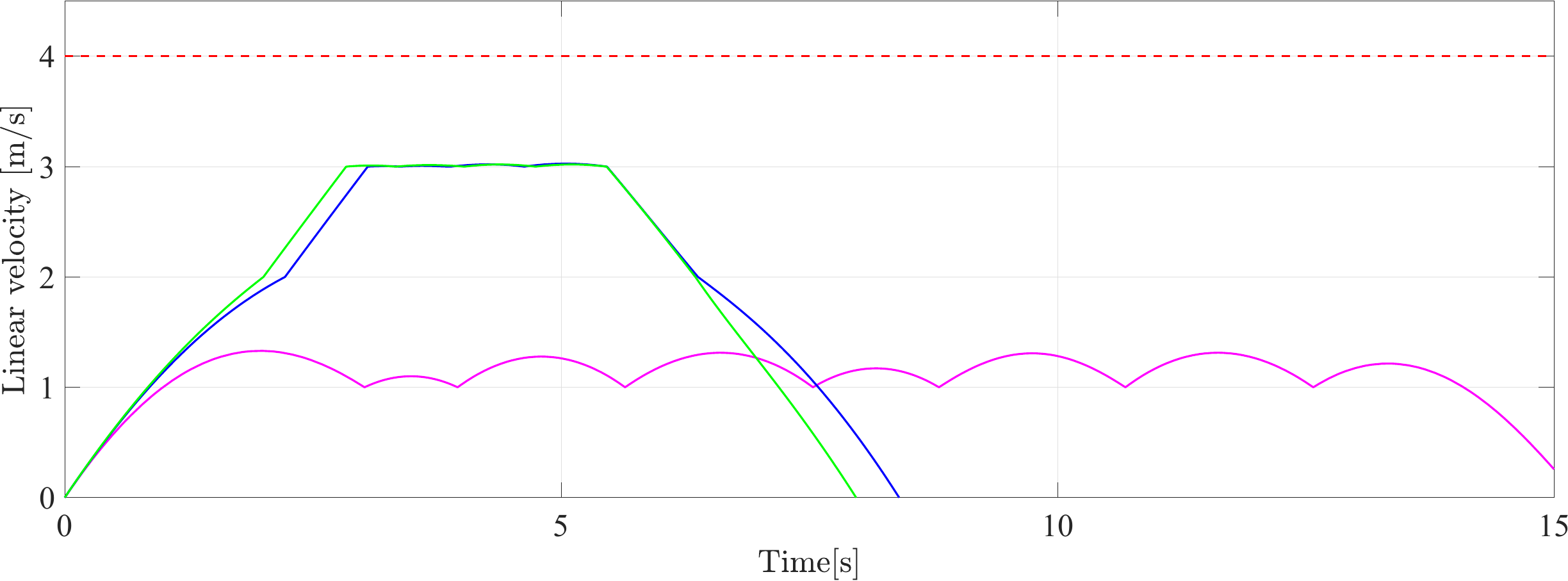}} \\
	\subfigure{\includegraphics[width=.99\linewidth]{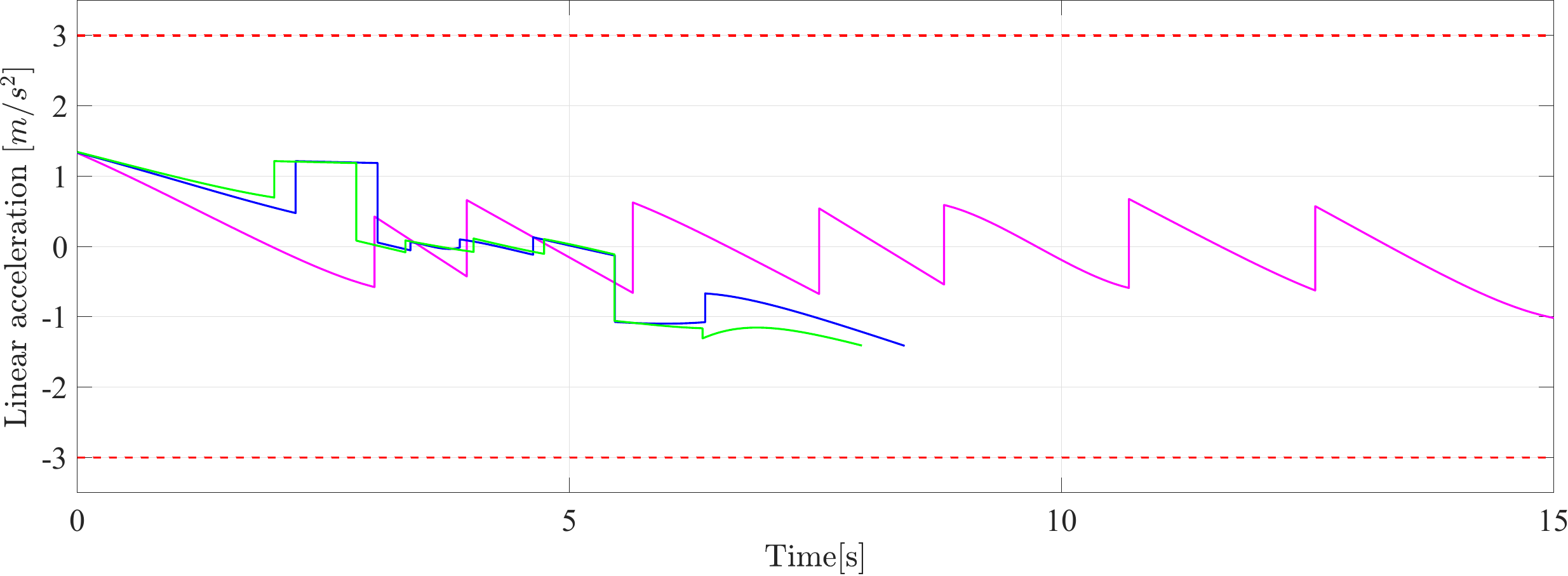}}\\
	\subfigure{\includegraphics[width=.99\linewidth]{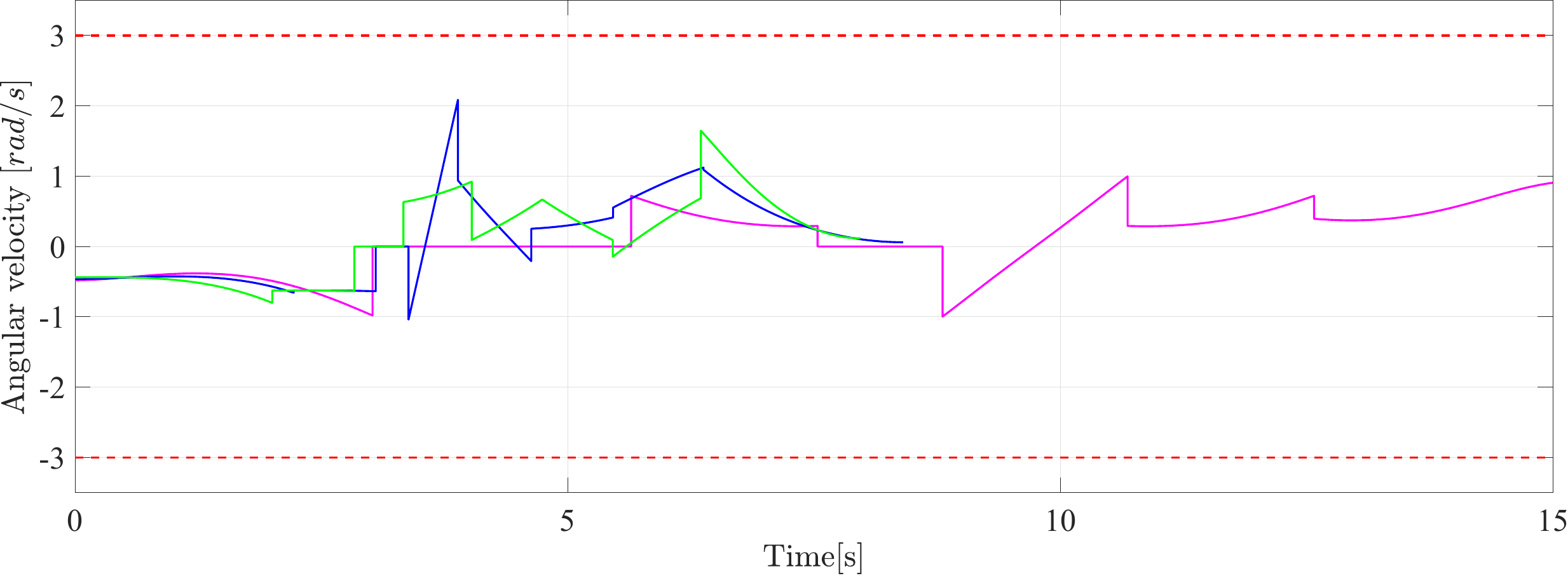}}
	\caption{Linear velocity and actuation profiles for the optimal trajectories in Figures \ref{fig:results_LR_3000} (pink line), \ref{fig:results_FR_50000} (blue line) and \ref{fig:results_DR_50000} (green line). Red lines show the velocity and actuation limits.}
	\label{fig:ResultsActProfiles}
\end{figure}

The motion primitives in the database are computed for each pair of initial and final state, $\qvect_0=[x_0,y_0,\theta_0, v_0]$ and $\qvect_f=[x_f,y_f,\theta_f,v_f]$, solving the TPBVP in \eqref{eqn:cost} for the differential equations given in \eqref{eqn:unicycle} and the cost function
\begin{equation*}
J(\uvect, \tau)=\int\limits_{0}^{\tau}\left[ 1 + \uvect(t)^TR\uvect(t)\right] \mathrm{d}t
\end{equation*}
that minimizes the total time of the trajectory $\tau$, penalizing the total actuation effort with a weight $R=0.5I_2$. The control variables $a$ and $w$ are bounded as $a \in [-3,3]\,\textrm{m/s}^2$, $w \in [-5,5]\,\textrm{rad/s}$.\\
TPBVPs are solved using MATLAB toolbox GPOPS \citep{patterson2014gpops}, a nonlinear optimization tool based on the Gauss pseudo-spectral collocation method.
\begin{figure*}[hpbt]
	\centering
	\subfigure[1000 iterations]{\includegraphics[width=.32\linewidth, clip,trim=11.4cm 0cm 11.8cm 1.25cm]{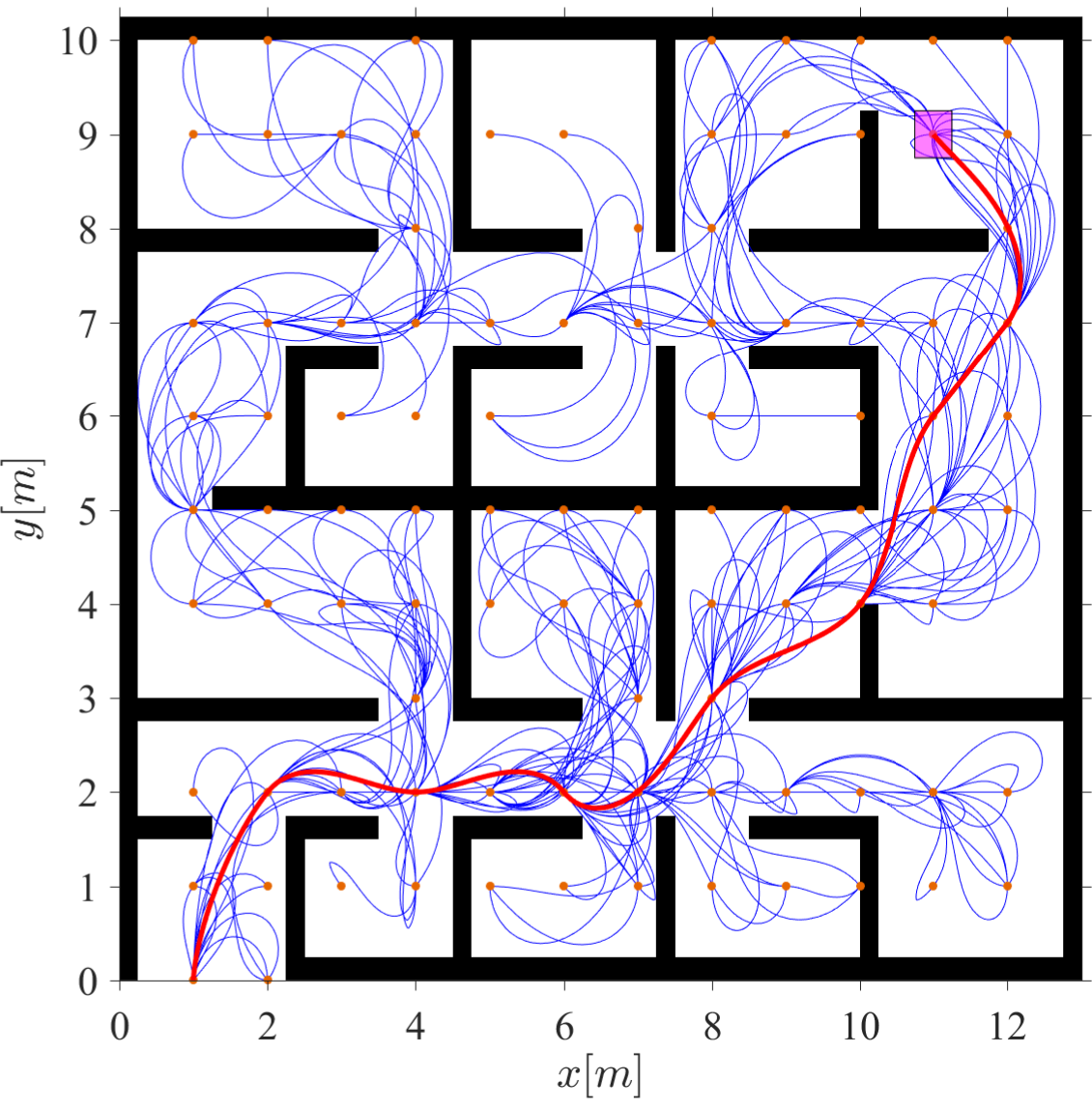}\label{fig:results_LR_1000}} \hfill
	\subfigure[2000 iterations]{\includegraphics[width=.32\linewidth, clip,trim=11.4cm 0cm 11.8cm 1.25cm]{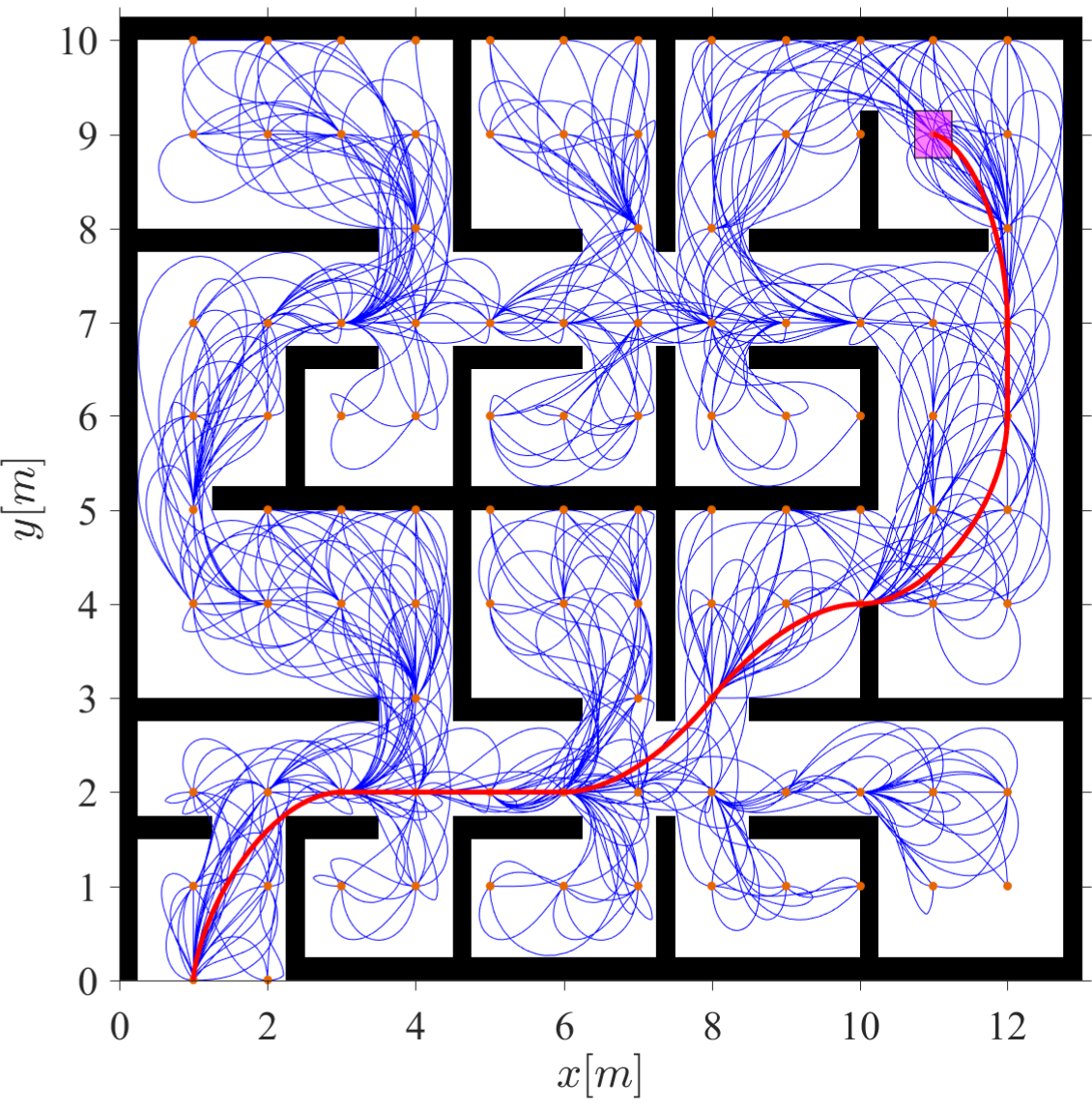}\label{fig:results_LR_2000}} \hfill
	\subfigure[3000 iterations]{\includegraphics[width=.32\linewidth, clip,trim=11.4cm 0cm 11.8cm 1.25cm]{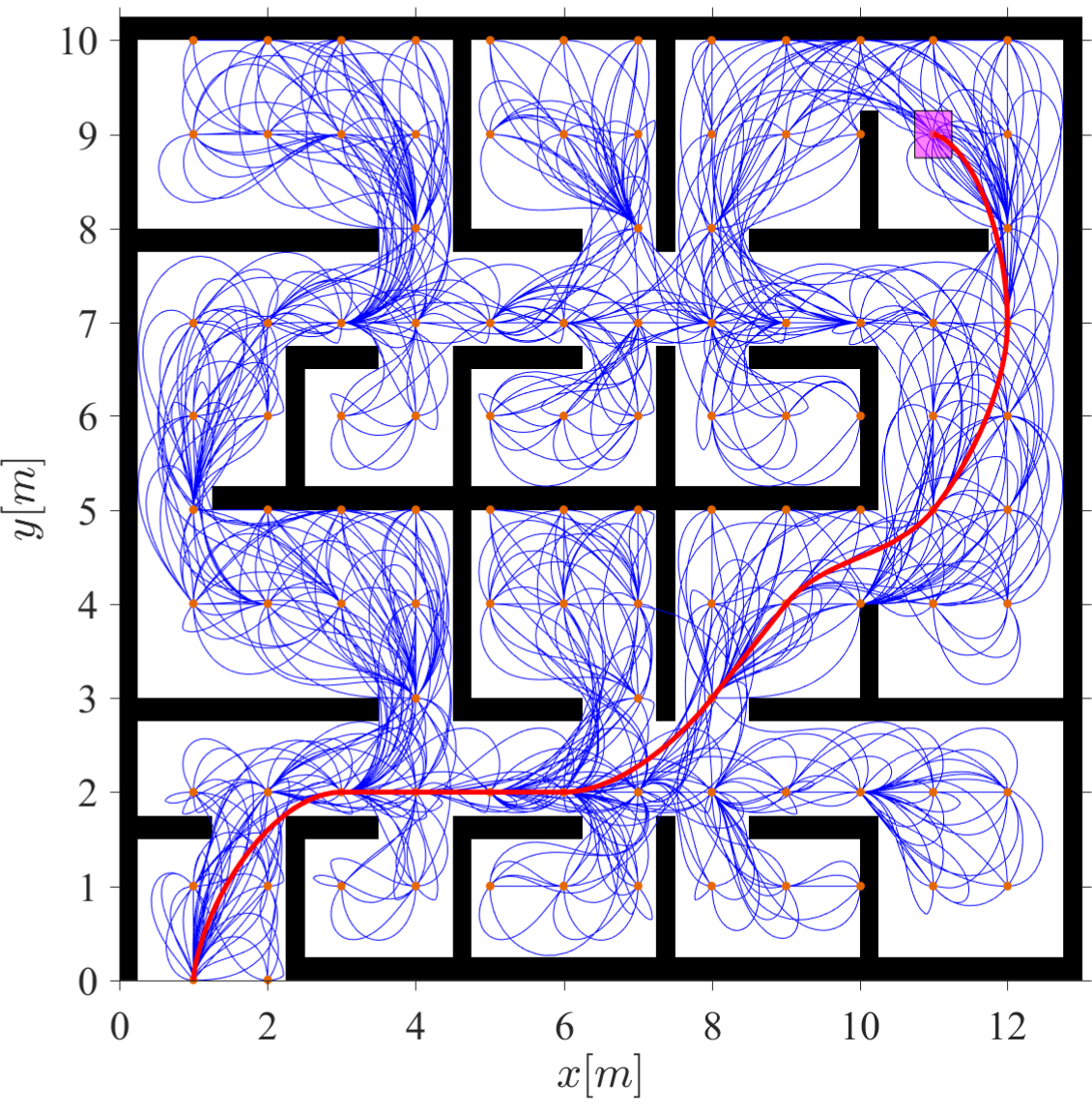}\label{fig:results_LR_3000}}
	\caption{Trajectories generated with an indoor map and a coarse resolution uniform square gridding for various number of iterations. Magenta square is the goal region, and the optimal trajectories are represented in red.}
	\label{fig:results_LR}
\end{figure*}
\begin{figure*}[hpbt]
	\centering
	\subfigure[5000 iterations]{\includegraphics[width=.32\linewidth, clip,trim=11.4cm 0cm 11.8cm 1.25cm]{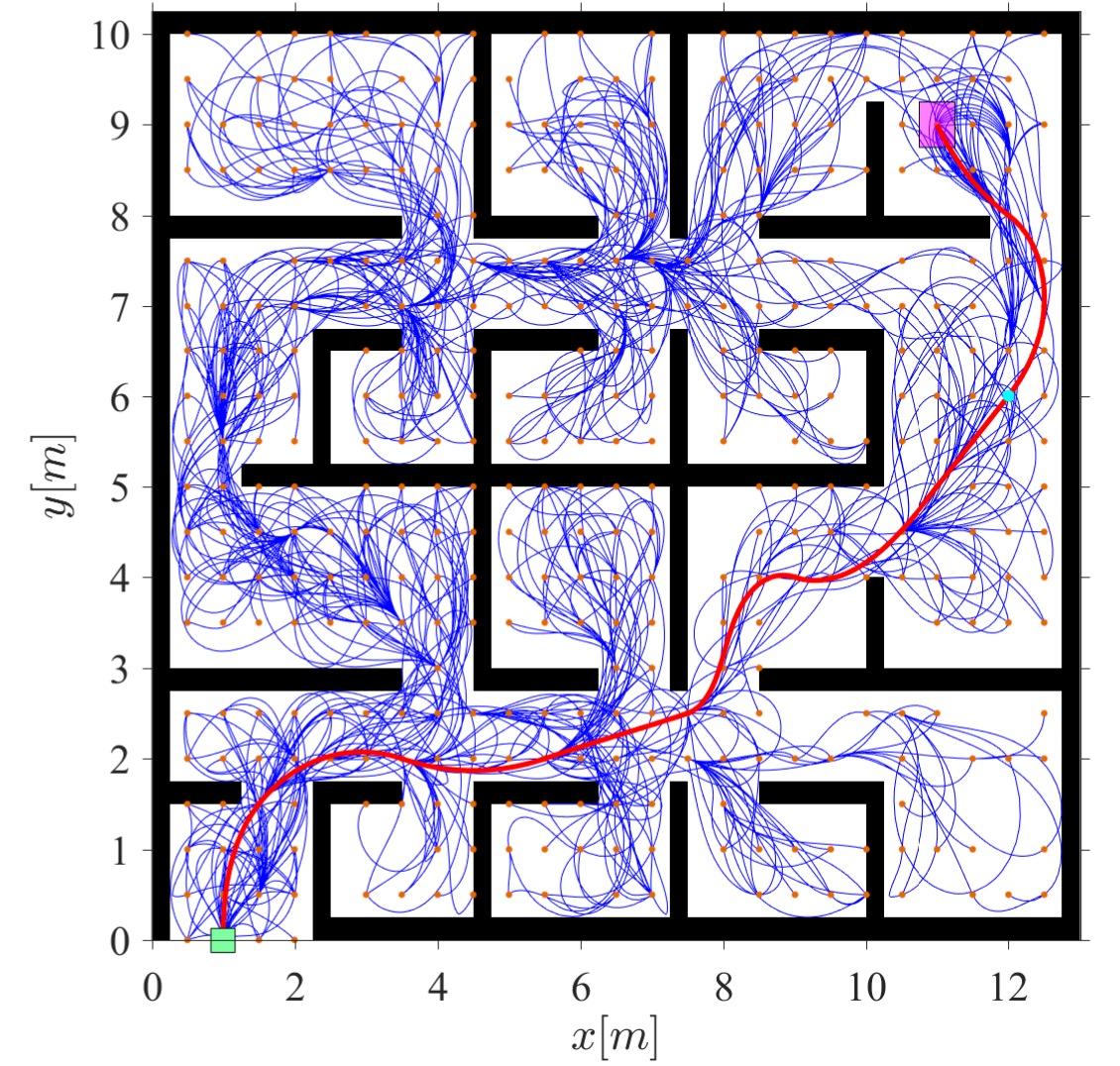}\label{fig:results_FR_5000}} \hfill
	\subfigure[20000 iterations]{\includegraphics[width=.32\linewidth, clip,trim=11.4cm 0cm 11.8cm 1.25cm]{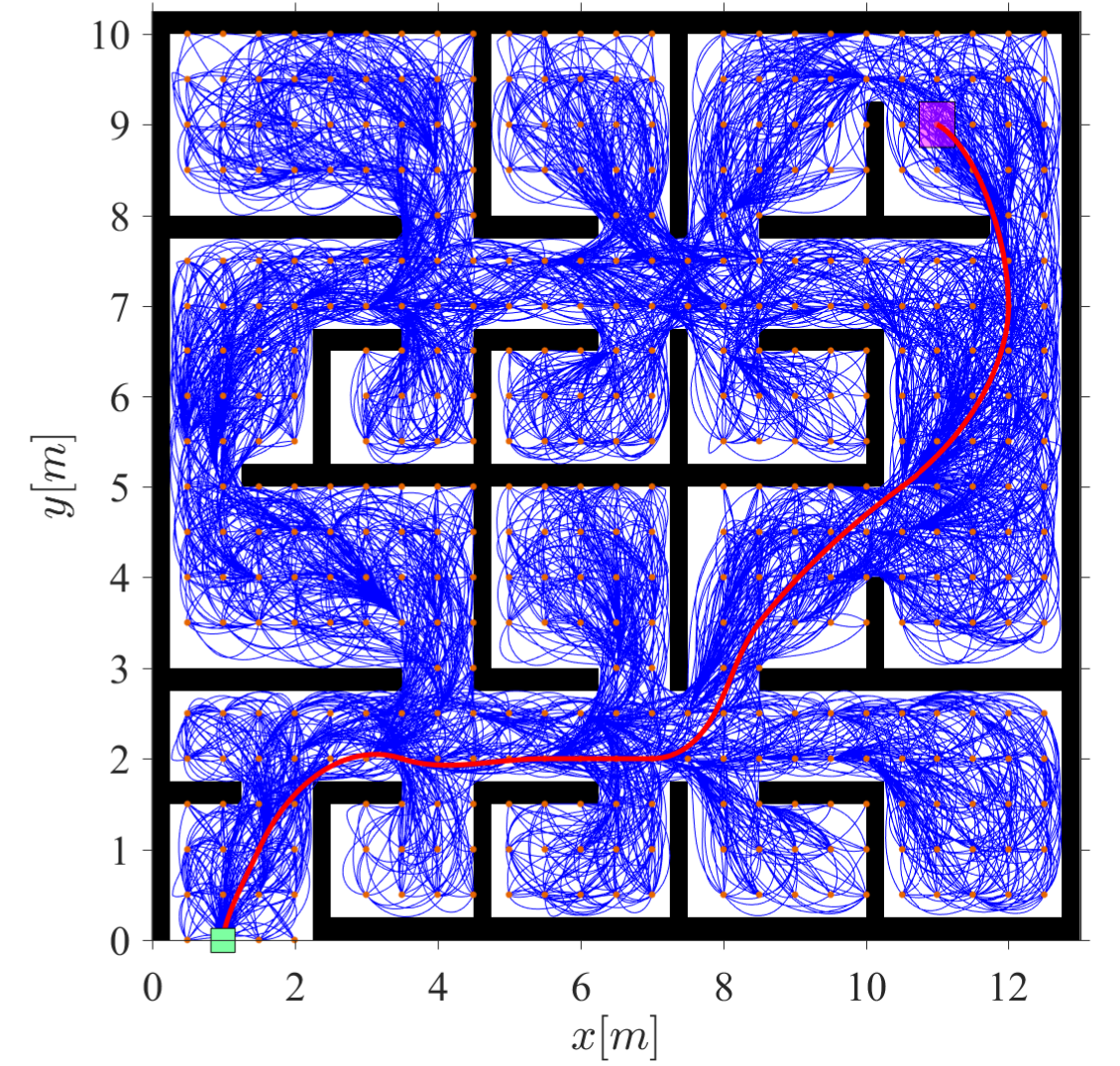}\label{fig:results_FR_20000}} \hfill
	\subfigure[50000 iterations]{\includegraphics[width=.32\linewidth, clip,trim=11.4cm 0cm 11.8cm 1.25cm]{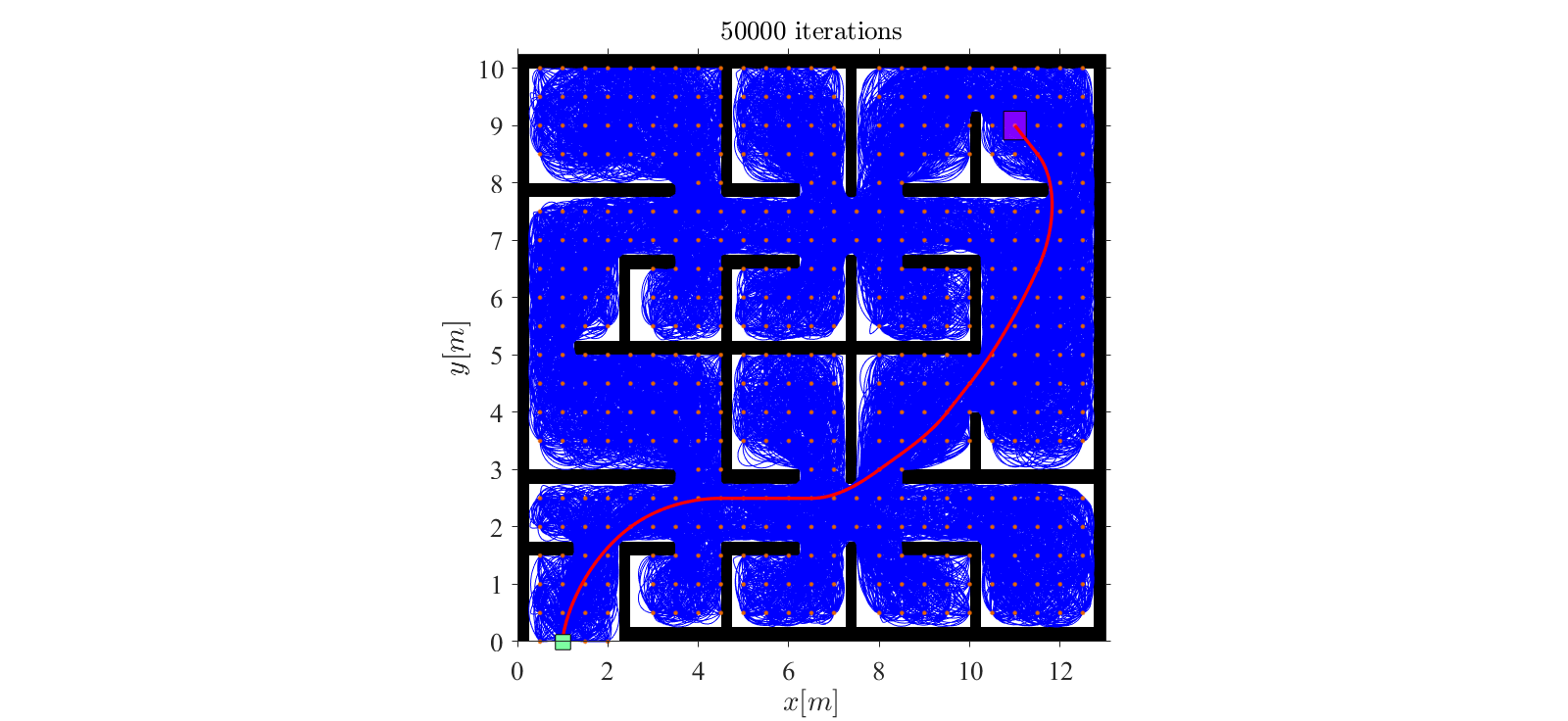}\label{fig:results_FR_50000}}
	\caption{Trajectories generated with an indoor map and a fine resolution uniform square gridding for various number of iterations. Magenta square is the goal region, and the optimal trajectories are represented in red.}
	\label{fig:results_FR}
\end{figure*}
\begin{figure*}[hbpt]
	\centering
	\subfigure[5000 iterations]{\includegraphics[width=.324\linewidth, clip,trim=11.2cm 0cm 11.1cm 1.3cm]{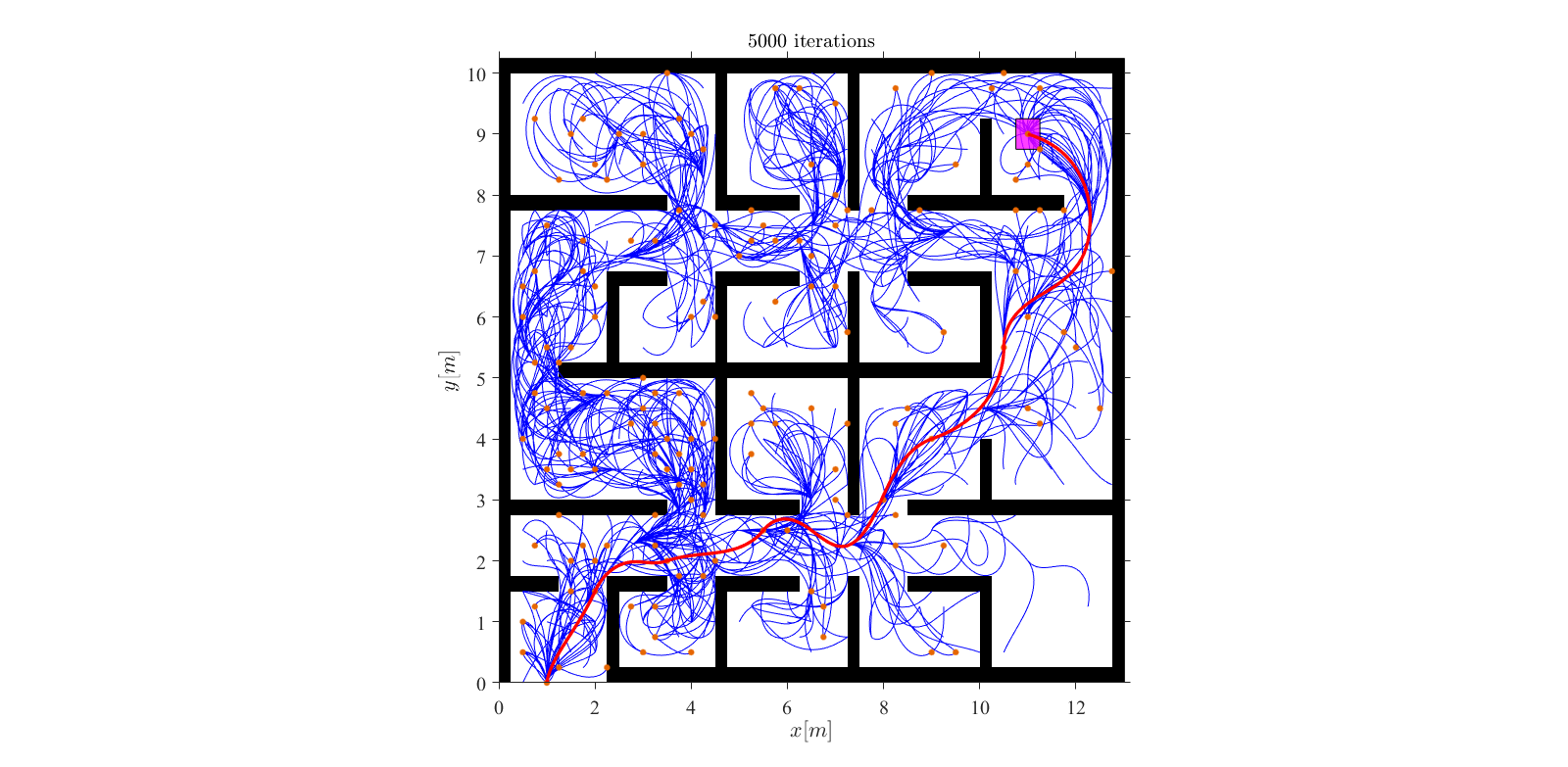}\label{fig:results_DR_5000}} \hfill
	\subfigure[20000 iterations]{\includegraphics[width=.325\linewidth, clip,trim=12.8cm 0cm 12.8cm 1.05cm]{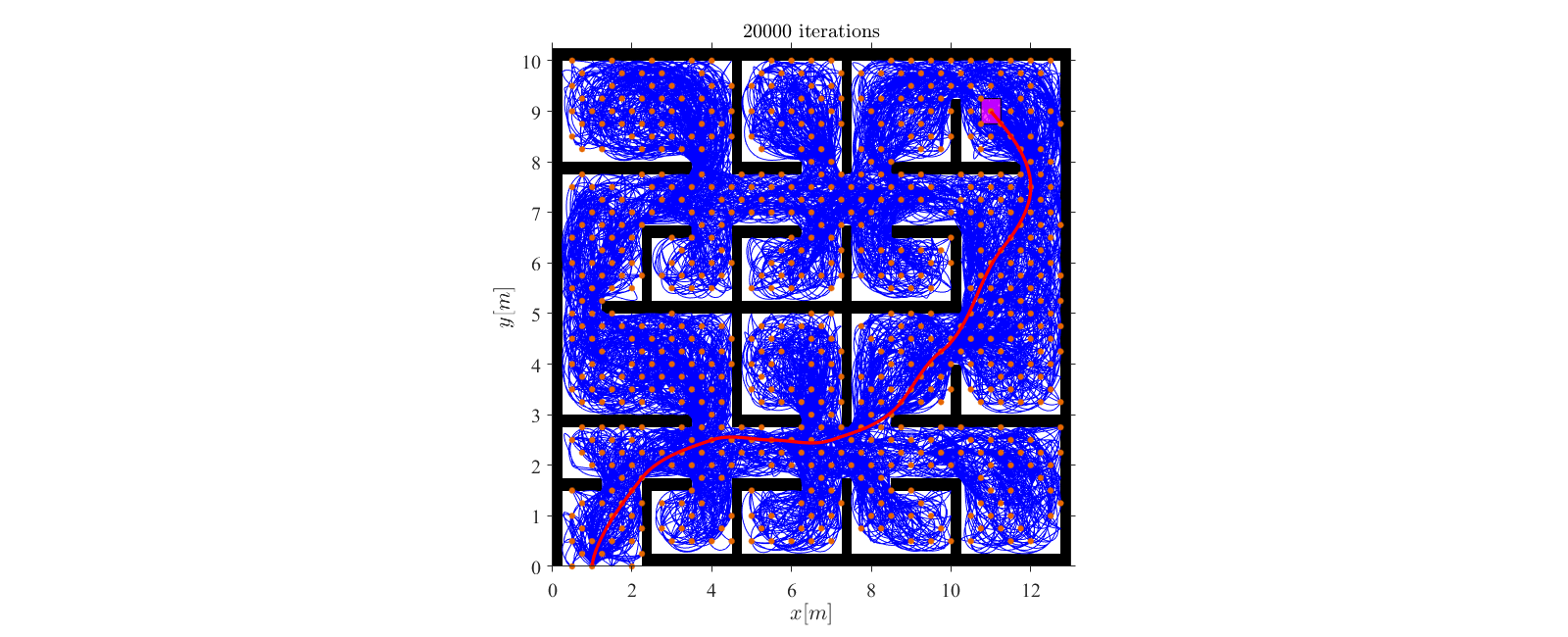}\label{fig:results_DR_20000}} \hfill
	\subfigure[50000 iterations]{\includegraphics[width=.32\linewidth, clip,trim=11.8cm 0cm 11.8cm 1.25cm]{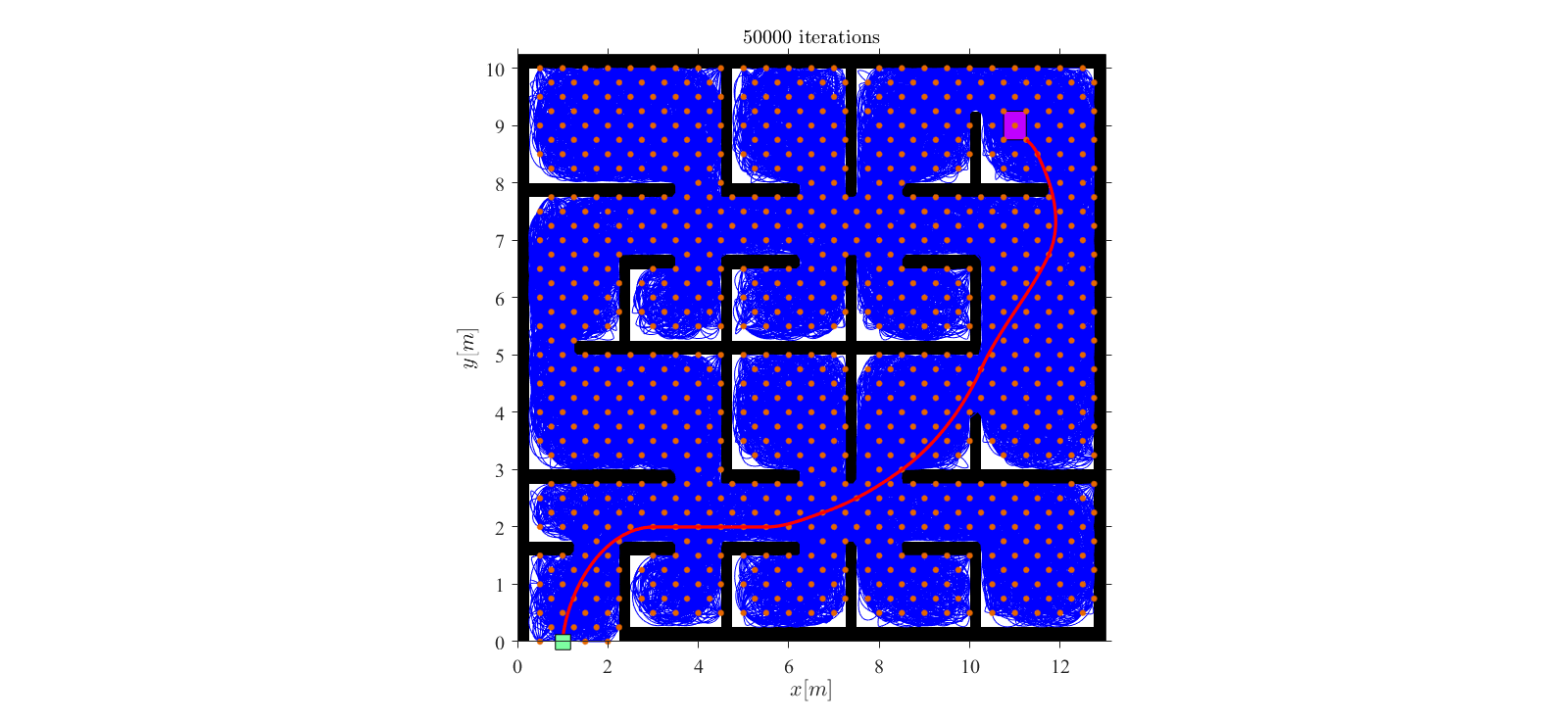}\label{fig:results_DR_50000}}
	\caption{Trajectories generated with an indoor map and a fine resolution uniform diamond gridding for various number of iterations. Magenta square is the goal region, and the optimal trajectories are represented in red.}
	\label{fig:results_DR}
\end{figure*}

Three databases based on different grids have been considered. For all of them the initial state is characterised by the same position $(x_0,y_0) = (0,0)$.\\
The first database is based on a coarse resolution uniform square grid (Figure \ref{fig:grid_types-a}), where \BSnote{$(x_f,y_f) \in [-2,2] \times [-2,2] \setminus \{(0,0)\}$} and each square cell has a size of one meter. The initial orientation $\theta_0$ is selected among three values $\{0, \pi/4, \pi/2 \}\,\textrm{rad}$, the final orientation $\theta_f$ can take 8 equally spaced values in the range $[0,2\pi)\,\textrm{rad}$. For the initial and final velocities, $v_0$ and $v_f$, a minimum and a maximum velocity of $1\,\textrm{m/s}$ and $4\,\textrm{m/s}$ is considered.\\
The second database is based on a fine resolution uniform square grid (Figure \ref{fig:grid_types-b}), where \BSnote{$(x_f,y_f) \in [-2,2] \times [-2,2] \setminus \{(0,0)\}$} and each square cell has a size of half a meter. The initial and final orientations can take 24 equally spaced values in the range $[0,2\pi)\,\textrm{rad}$. The initial and final velocities are selected among 5 equally spaced values in the range $[0,4]\,\textrm{m/s}$.\\
Finally, the third database is based on a uniform diamond grid (Figure \ref{fig:grid_types-c}), characterised by the same initial and final states as the previous one, plus some additional final states at $(x_f,y_f) \in [-1.75,1.75] \times [-1.75, 1.75]$ with a discretization step of half a meter in each direction. These additional states are characterized by the same orientation and velocity of the rest of the database.
\begin{figure}[htbp]
	\centering
	\includegraphics[width=.99\linewidth]{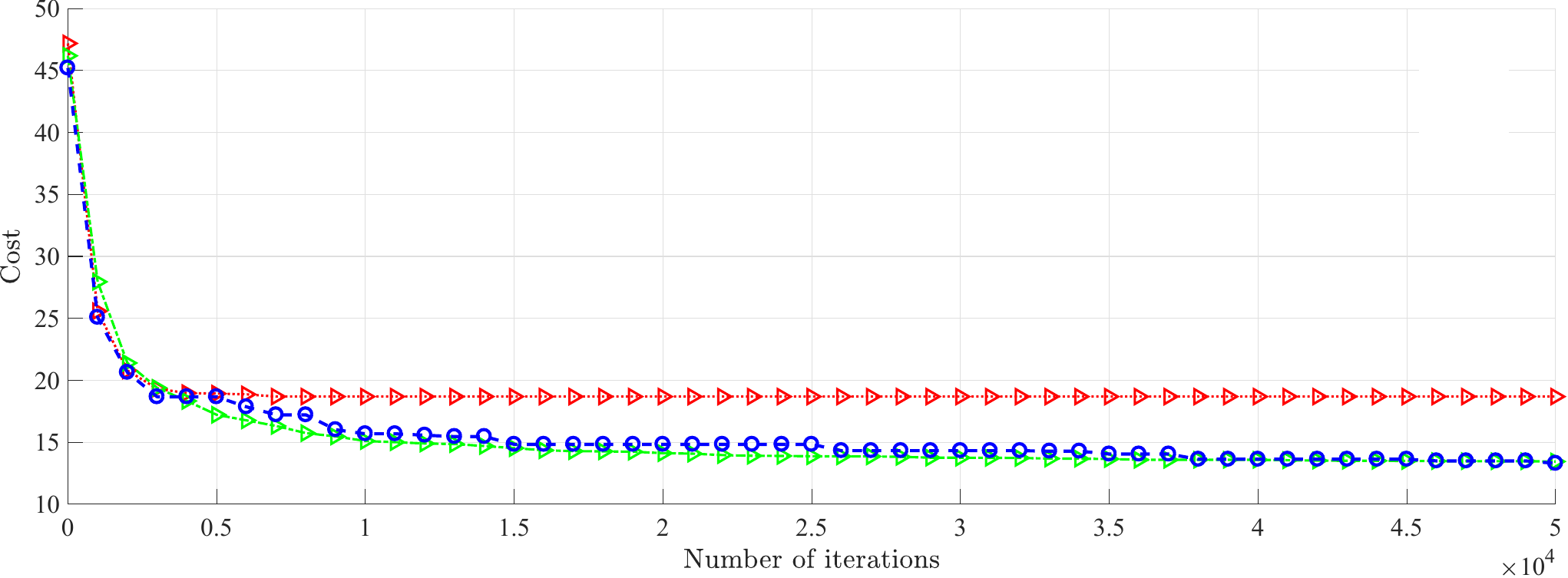}
	\caption{Cost with respect to the number of iterations. Coarse resolution square gridding (red line), fine resolution square gridding (green line) and diamond gridding (blue line) as a function of number of iterations.}
	\label{fig:costs_all}
\end{figure}

Simulations are performed on an IntelCore i7@2.40 GHz personal computer with 8Gb RAM and the algorithm has been implemented in MATLAB.\\
An indoor map is considered (Figures \ref{fig:results_LR}-\ref{fig:results_DR}), setting the robot initial pose at $\left(1,0,\pi/2\right)$ with zero velocity. The goal area is defined as a square of half a meter side and centred at $(9,11)$. The robot should stop at the end of the trajectory.
\begin{figure}[htbp]
	\centering
	\includegraphics[width=.99\linewidth]{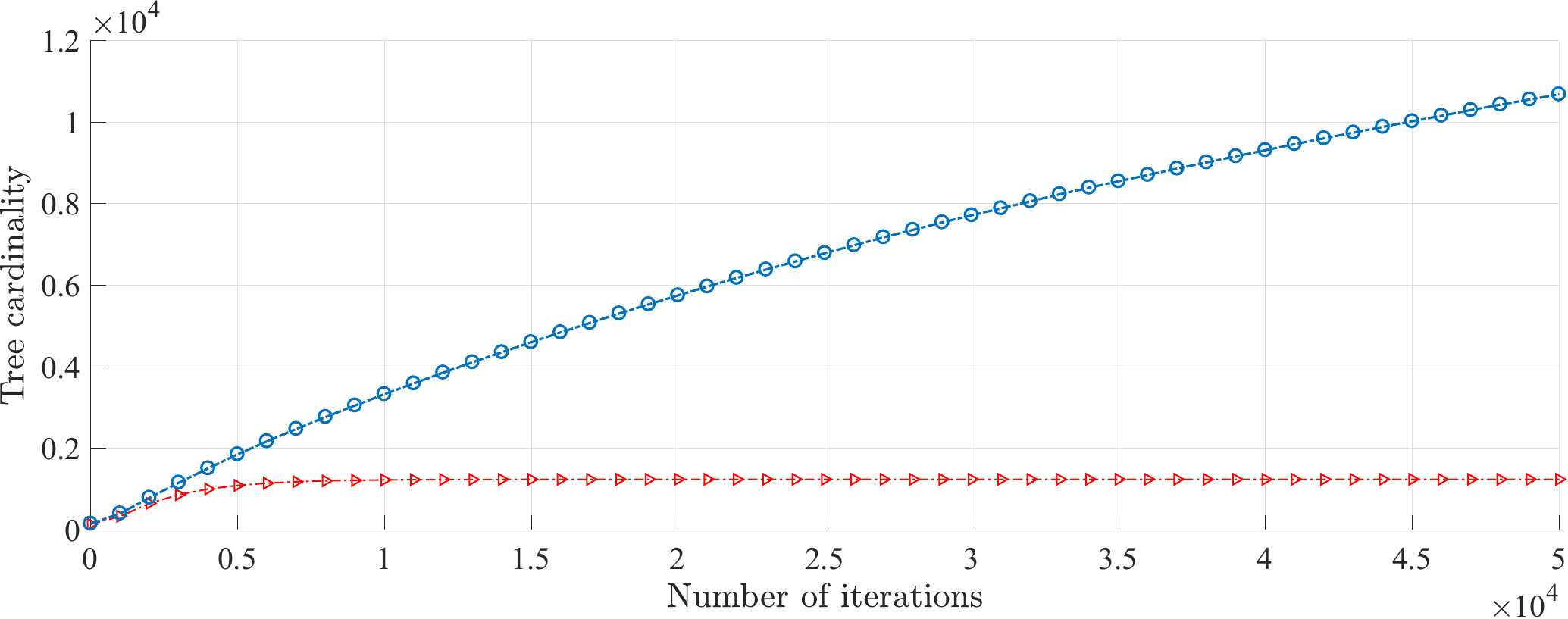}
	\caption{Tree cardinality with respect to the number of iterations, for coarse (red line) and fine (blue line) resolution square gridding.}
	\label{fig:treeCard}
\end{figure}

\begin{figure}[htbp]
	\centering
	\includegraphics[width=.99\linewidth]{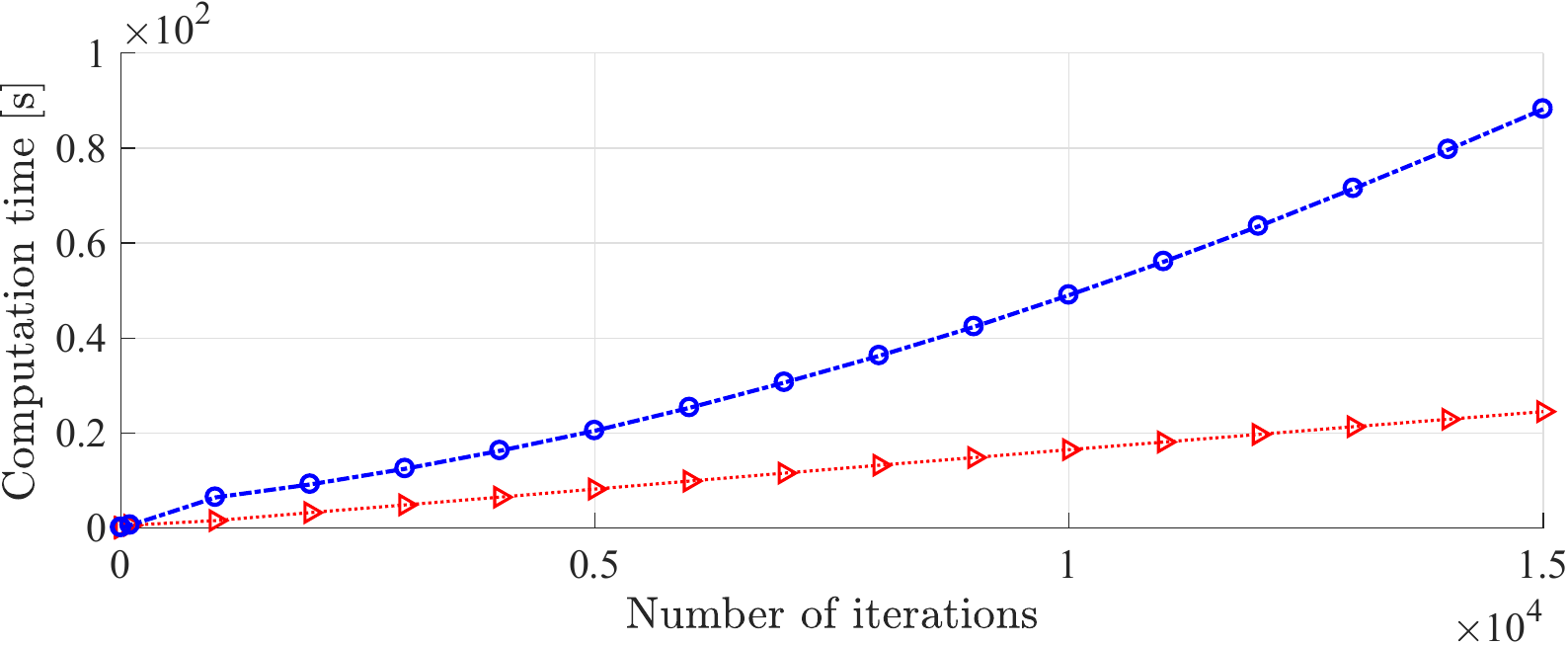}
	\caption{Computation time with respect to the number of iterations, for coarse (red line) and fine (blue line) resolution square gridding.}
	\label{fig:elapsedTime}
\end{figure}

Figures~\ref{fig:results_LR}-\ref{fig:results_DR} show the trees and the optimal trajectories obtained for different number of iterations and for the three different gridding strategies. Correspondingly, Figure \ref{fig:ResultsActProfiles} shows the velocity and the actuation profiles for the optimal trajectories computed with 3000 and 50000 iterations, and reported in Figures~\ref{fig:results_LR_3000}, \ref{fig:results_FR_50000} and \ref{fig:results_DR_50000}, clearly demonstrating that the velocity constraint and the actuation bounds are satisfied. Note that the velocity profile that corresponds to the coarse resolution square gridding exhibits a jerky acceleration behaviour, due to the fact that the velocity at each node is constrained to be exactly one of the values in the database. This demonstrates that the velocity discretization step has to be accurately selected if a smoother velocity profile is required.

The same planning problem has been solved for 10 independent simulation runs. Figure~\ref{fig:costs_all} shows the average cost evolutions related to the coarse resolution, fine resolution and diamond gridding, as the number of iterations increases. As expected, the cost reduces increasing the number of iterations, and converges to the \emph{resolution optimal $\Delta$-cost}: once this minimum is achieved the solution will not further improve.\\
As can be easily seen, motion primitives computed using a denser grid provide lower cost plans. Moreover, the resolution optimal $\Delta$-cost achieved using the fine resolution square and diamond grids are similar, demonstrating that the choice of the discretization step is strictly related to the specific problem. Finally, it is worth mentioning that as the resolution of the grid increases, the cardinality of $Q_{free}^{\Delta}$ increases as well, slowing down the convergence to the resolution optimal $\Delta$-cost.

\BSnote{In order to assess the impact of the grid resolution on the size of the search space, in Table~\ref{tab:gridSizeBranch} we report the number of nodes corresponding to the grids of the three adopted databases (see Figure \ref{fig:grid_types}), together with the corresponding minimum and maximum branching factors, i.e., the number of neighbors that each node is connected to.  The computed number of nodes is an upper bound on the cardinality of $\mathcal{G}^\Delta_{free}$. Yet, from the figures in Table~\ref{tab:gridSizeBranch}, it should be clear the combinatorial nature of the problem, which makes it hard building the whole graph of motion primitives and applying a graph search. As a matter of fact, the most commonly used lattice-based approaches use graph search algorithms that resort to some  heuristic (see for example dynamic A$^\star$ (D$^\star$) and anytime repairing A$^\star$ (ARA$^\star$) by \cite{stentz1994optimal} and \cite{likhachev2008anytime} respectively). 
Note that  \RRTMP \, does not need to adopt any heuristic, but if a smart heuristic were available, it could be integrated within \RRTMP \, to speed up the search.}

\BSnote{\begin{table}[htbp!]
	\vspace{0.1cm}
	\centering \caption{\BSnote{Number of nodes of the state space grids in Figure \ref{fig:grid_types} corresponding to different resolutions together with the minimum and maximum branching factors.} 
	}
	\label{tab:gridSizeBranch}
	\vskip 0.1cm
	\scalebox{0.80}{
		\begin{tabular}{cccc}
			\toprule
			Type of grid & Number of nodes & \multicolumn{2}{c}{Branching factor} \\
			\bottomrule
			&  & min & max \\
			\cline{3-4}
			Figure \ref{fig:grid_types}(a) & 2,460 & 324 & 370 \\
			Figure \ref{fig:grid_types}(b) & 68,000 & 7,300 & 9,100 \\
			Figure \ref{fig:grid_types}(c) & 131,000 & 12,000 & 15,000 \\
			\bottomrule
			& & & 	
		\end{tabular}
	}
\end{table}}

Figure~\ref{fig:treeCard} shows the cardinality of the tree for the coarse and fine resolution square griddings, as the number of iterations increases. Since the nodes that can be added to the tree are limited to the elements of the grid, the tree cardinality converges to the cardinality of $\mathcal{G}^\Delta_{free}$: once this value is achieved no more nodes can be added.

Figure~\ref{fig:elapsedTime} shows the computation time evolution for the coarse and fine resolution griddings, as the number of iterations increases. Though \RRTMP\, does not necessarily add a new node at each iteration, but can only change the existing connections, showing the computation time evolution with respect to the number of iterations allows to easily relate this quantity to the achieved cost, and thus the degree of sub-optimality of the planned trajectory.

It is also worth mentioning that the resulting computation time is promising, even for online replanning in the case of dynamic and partially known environments. Code optimisation and a C/C++ implementation can be considered for a further speed up.

\BSnote{To better emphasize the advantage of using a precomputed database, we report in Figure~\ref{fig:histogram} the histogram of the computation time for solving a single TPBVP of the considered example using the GPOPS commercial numerical solver \citep{patterson2014gpops}. As can be seen from this figure, it typically takes around 400 ms to get a solution for a single TPBVP while a trajectory can be extracted from the database in a time of the order of 0.01 ms (values ranged between 0.008 ms and 0.015 ms over 100 trials). Note that at each iteration of the standard $\tt{RRT^\star}$ algorithm, a set of TPBVPs that corresponds to the set of tentative trajectories connecting $\qvect_{rand}$ to a set of nearby nodes has to be solved. When the TPBVP is not easy to be solved (like in the considered example), the applicability of  $\tt{RRT^\star}$ to dynamic and partially known environments is hampered.
}

\begin{figure}[thbp!]
	\centering
	\includegraphics[trim={1.5cm 0 1.5cm 0},clip,width=\linewidth]{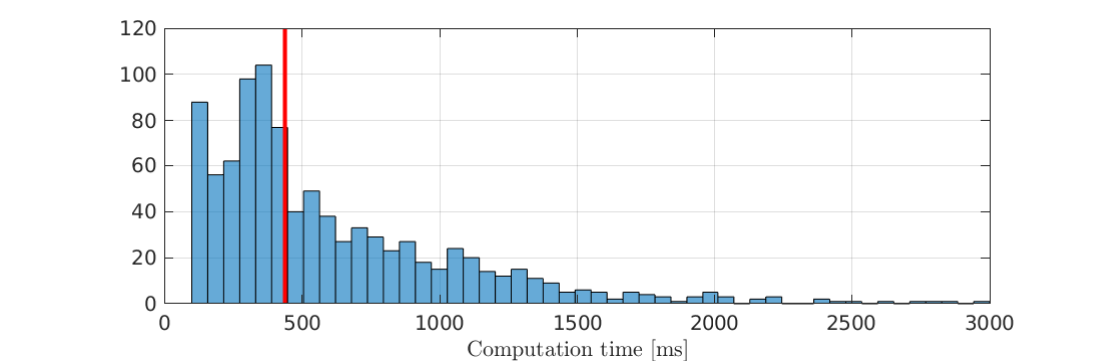}
	\caption{\BSnote{Histogram of the computation time for solving a single TPBVP. The histogram is determined based on 1000 boundary pairs randomly selected from the same subspace used for building the motion primitives.}}
	\label{fig:histogram}
\end{figure}


\section{CONCLUSIONS} \label{sec:conc}
In this paper, a variant of RRT$^\star$, named ${\tt {RRT^\star \text{\underline{\hspace{0.2cm}}} Motion-}}$\
${\tt Primitives}$, that allows to introduce motion primitives in the RRT$^\star$ planning framework is presented. In particular, a set of pre-computed trajectories, named motion primitives, is used to substitute the computationally challenging step of solving for a steering action. Then, in order to ensure that for any queried steering action a pre-computed trajectory exists, a grid representation of the state space has been introduced.\\
This newly conceived algorithm is supported by an accurate theoretical analysis, demonstrating the optimality and probabilistic completeness.\\
The performance of \RRTMP\, has been verified in simulation, showing promising results in terms of quality of the planned trajectory and computation time, that is particularly important for an online usage in the case of dynamic environments that require repeated replanning. The results show also that as the grid size gets smaller, asymptotic optimality is achieved. Having a fine resolution, however, increases the size of the database and the number of iterations required to converge to the resolution optimal trajectory. Nevertheless, one advantage of adopting a sampling based approach is the possibility of computing a feasible though sub-optimal solution first, and then, in case more time is available, improve it. One should indeed choose the best compromise between computing time and performance, according to the application at hand.

\begin{acknowledgements}
This work is supported by the European Commission under the project UnCoVerCPS with grant number 643921.
\end{acknowledgements}

\bibliographystyle{spbasic}      
\bibliography{planning}{}   

\end{document}